\newcommand{\influenceMinimization}{IMIN\xspace}
\newcommand{\method}{\textsc{DiffIM}\xspace}
\newcommand{\naive}{\textsc{DiffIM}\xspace}
\newcommand{\adv}{\textsc{DiffIM+}\xspace}
\newcommand{\advp}{\textsc{DiffIM++}\xspace}
\newcommand{\NAIVE}{\textsc{DiffIM}\xspace}
\newcommand{\ADV}{\textsc{DiffIM+}\xspace}
\newcommand{\ADVP}{\textsc{DiffIM++}\xspace}
\newcommand{\GNN}{\operatorname{GNN}}
\newcommand{\EL}{\textt{ET}\xspace}
\newcommand{\CL}{\textt{CL}\xspace}
\newcommand{\WL}{\textt{WC}\xspace}
\newcommand{\MDS}{\textsc{MDS}\xspace}
\newcommand{\KED}{\textsc{KED}\xspace}
\newcommand{\BPM}{\textsc{BPM}\xspace}
\newcommand{\Greedy}{\textsc{Greedy}\xspace}
\newcommand{\MBPM}{\textsc{MBPM}\xspace}
\newcommand{\RIS}{\textsc{RIS}\xspace}
\newcommand{\textt}[1]{\scalebox{1.0}{\texttt{#1}}}
\newtheorem{theorem}{Theorem}
\newtheorem{lemma}{Lemma}
\newtheorem{definition}{Definition}
\newtheorem{problem}{Problem}
\newtheorem{remark}{Remark}
\def\mydefbb#1{\expandafter\def\csname bb#1\endcsname{\ensuremath{\mathbb{#1}}}}
\def\mydefallbb#1{\ifx#1\mydefallbb\else\mydefbb#1\expandafter\mydefallbb\fi}
\def\mydefcal#1{\expandafter\def\csname cal#1\endcsname{\ensuremath{\mathcal{#1}}}}
\def\mydefallcal#1{\ifx#1\mydefallcal\else\mydefcal#1\expandafter\mydefallcal\fi}
\newcommand{\Abs}[1]{\lvert #1 \rvert}
\newcommand{\Set}[1]{\{#1\}}
\newcommand{\citepnop}[1]{\citeauthor{#1} \citeyear{#1}}
\title{\method: Differentiable Influence Minimization with\\Surrogate Modeling and Continuous Relaxation}
\author{
    Junghun Lee, 
    Hyunju Kim, 
    Fanchen Bu, 
    Jihoon Ko, 
    Kijung Shin 
}
\begin{document}

\maketitle

\begin{abstract}
In social networks, people \textit{influence} each other through social links, which can be represented as propagation among nodes in graphs.
\textit{Influence minimization} (\influenceMinimization) is the problem of manipulating the structures of an input graph (e.g., removing edges) to reduce the propagation among nodes.
\influenceMinimization can represent time-critical real-world applications, such as rumor blocking, but \influenceMinimization is theoretically difficult and computationally expensive.
Moreover, the \textit{discrete} nature of \influenceMinimization hinders the usage of powerful machine learning techniques, which requires \textit{differentiable} computation.
In this work, we propose \method, a novel method for \influenceMinimization with two \textit{differentiable} schemes for acceleration:
(1) \textit{surrogate modeling} for efficient influence estimation, which avoids time-consuming simulations (e.g., Monte Carlo), and
(2) the \textit{continuous relaxation} of decisions, which avoids the evaluation of individual discrete decisions (e.g., removing an edge).
We further propose a third accelerating scheme, \textit{gradient-driven selection}, that chooses edges \textit{instantly based on gradients} without optimization (spec., gradient descent iterations) on each test instance.
Through extensive experiments on real-world graphs, we show that each proposed scheme significantly improves speed with little (or even no) \influenceMinimization performance degradation.
Our method is Pareto-optimal (i.e., no baseline is faster \textit{and} more effective than it) and typically several orders of magnitude (spec., up to 15,160$\times$) faster than the most effective baseline while being more effective. 
\end{abstract}

\begin{links}
     \link{Code, datasets and online appendix}{https://github.com/junghunl/DiffIM}
\end{links}

\section{Introduction}
\label{sec:intro}
In both online and offline social networks, a common phenomenon is \textit{influence}.
That is, people influence other people through social links.
Typical examples include the spread of information (e.g., rumors) and the contagion of a disease (e.g., COVID-19).
We can model social networks as graphs and use the propagation among nodes to simulate such processes~\citep{kempe2005influential}, and
several \textit{diffusion models} mathematically model such propagation.

While the problem of influence maximization has been widely studied, prior research has also explored  \textit{influence minimization} (\influenceMinimization), where one aims to manipulate graph structures to reduce the propagation among nodes.
\influenceMinimization is relevant to real-world scenarios, such as blocking the spread of rumors or diseases, and it has been studied with several different formulations~\citep{yan2019rumor,ni2023misinformation}.
In this work, we mainly focus on a formulation with edge removal under the independent cascade (IC) model~\citep{kempe2003maximizing}, due to its realisticness and generality; and we provide discussions and experiments on other models, spec., the linear threshold (LT) model~\citep{kempe2003maximizing} and the general Markov chain susceptible-infected-recovered (G-SIR) model~\citep{yi2022edge}, in Appendix~B.
Specifically, the IC model is realistic and widely considered for modeling the spread of information~\citep{tripathy2010study} 
and diseases~\citep{borgs2014maximizing}; 
and edge removal is general, including node removal, another widely considered graph manipulation, as a special case.
For such real-world scenarios, we need timely actions since any delay could witness an exponential explosion in the spread of information~\citep{jin2013epidemiological} and diseases~\citep{platto2021history}.

However, this problem is NP-hard, and even simply computing influence under the IC model is computationally expensive (spec., \#P-hard; \citepnop{chen2010scalable}).
Therefore, several ideas, including Monte Carlo (MC) simulation, bond percolation~\citep{Kimura2009blocking}, reverse influence sampling~\cite{borgs2014maximizing,yi2022edge}, and marginal-decrement heuristics~\citep{yan2019rumor}, are available for influence estimation.
However, they offer limited speed improvements (e.g., due to extensive sampling requirements) and/or rely on assumptions (e.g., acyclic input graphs) that do not generally hold in practice.

Notably, existing methods approach the \influenceMinimization problem as a \textit{discrete} optimization problem, which hinders the application of powerful continuous-optimization (e.g., gradient descent) and machine-learning (e.g., neural networks) techniques.

Instead, we propose a novel method for \influenceMinimization, called \method, with two main \textit{differentiable} schemes:
(1) \textit{surrogate modeling} for efficient influence estimation, 
and (2) \textit{continuous relaxation} of edge removal.
To the best of our knowledge, we are the first to approach influence minimization using \textit{differentiable learning} instead of discrete optimization.

\paragraph{Surrogate modeling.}
First, we propose an efficient scheme for influence estimation.
Inspired by~\citet{ko2020monstor}, we propose to train graph neural networks (GNNs) to ``predict'' the influence when given an input graph and seed nodes (i.e., the nodes where the propagation starts).
Such surrogate modeling leverages the efficiency of GNNs and avoids time-consuming MC simulations or other estimation methods.
Although GNN training introduces additional overhead, it is affordable because once trained in advance, the GNN efficiently estimates the influence of unseen graphs and/or seed nodes.

\paragraph{Continuous relaxation.}
Second, to further enhance speed, we propose to relax the edge removal decisions, so that we can directly optimize continuous (i.e., probabilistic) decisions without evaluating individual edge removal.
Specifically, for each edge, instead of considering a binary decision (to remove it or not), we consider a probabilistic decision representing the probability of removing it.
Such relaxation also allows us to incorporate powerful machine-learning techniques, which typically use gradient descent and thus cannot be naively applied to the original discrete problem.

\paragraph{Gradient-driven selection.}
These two schemes enable us to compute gradients w.r.t. the probabilistic decisions on removing edges, and the gradient of each edge is naturally interpreted as its ``sensitivity''.
Specifically, with influence as the objective, the influence is more sensitive to the edges with higher gradients, and removing such edges is expected to reduce the influence more effectively.
Hence, we further propose our third speed-up scheme, \textit{gradient-driven selection}, that removes edges based on their gradients, without further optimization (spec., gradient descent iterations).

\begin{table}[t!]
    \setlength{\tabcolsep}{1mm}
    \centering
    \begin{adjustbox}{max width=\linewidth}
    \begin{tabular}{c|l}
        \toprule
        \textbf{Symbol} & \textbf{Definition} \\
        \midrule
        $G=(V, E)$ & a graph with a node set $V$ and  an edge set $E$ \\
        $p: E \to [0, 1]$ & activation probabilities \\
        $S \subseteq V$ & a seed set \\
        $\pi(v; G, p, S)$ & the influenced probability of $v$ under $\operatorname{IC}(G, p, S)$ \\
        $\sigma(S; G, p)$ & the expected influence of $S$ under $\operatorname{IC}(G, p, S)$ \\
        $\GNN_\theta(v; G, p, S)$ & the predicted influenced probability of $v$ by $\GNN_\theta$ \\
    \bottomrule
    \end{tabular}
    \end{adjustbox}
    \caption{Frequently-used notations}
    \label{tab:notation}
\end{table}

With the three proposed schemes, we propose three versions, \NAIVE, \ADV, and \ADVP, equipped with the first scheme, the first two schemes, and all three schemes, respectively. More schemes result in faster speed, with little (or even no) \influenceMinimization performance degradation.

Through extensive experiments on three real-world graphs, we show the superiority of the \method family over baselines.
Specifically, all the versions of \method are Pareto-optimal (i.e., no baseline is faster \textit{and} more effective than any version of \method), and they are typically orders of magnitude faster (spec., up to 15,160$\times$) than the most effective baseline, while also being more effective.
We also show their ability to perform well when trained and tested on different graphs.

{In short, our main contributions are three-fold:}
\begin{itemize}  
    \item \textbf{Differentiable learning based approach:} 
    To the best of our knowledge, we are the first to tackle \influenceMinimization using differentiable learning, instead of discrete combinatorial optimization.
    We make this approach feasible by leveraging GNNs as surrogate models and employing the continuous relaxation of edge removal.
    \item \textbf{Gradient-driven acceleration:} We propose another speed-up scheme, \textit{gradient-driven selection}, that selects edges based on their gradients without additional test-time optimization (spec., gradient descent iterations).
    \item \textbf{Extensive experiments:} 
    We demonstrate the empirical superiority of our methods over baselines in \influenceMinimization on real-world graphs, in terms of both speed and effectiveness.
\end{itemize}

\section{Preliminaries}
\label{sec:prelim}
\paragraph{Basic concepts.}
Refer to Table~\ref{tab:notation} for frequently-used notations.
Let $\bbN$ be the set of positive integers, and let $[n]$ be $\{1,\dots, n\}$.
A graph $G=(V, E)$ is defined by a node set $V$ and an edge set $E$.
We consider \textit{directed} edges, i.e.,
each edge $e = (u, v) \in E$ is a directed link from $u \in V$ to $v \in V$.

\begin{definition}[Independent cascade (IC) model]\label{def:IC}
    Given 
    (1) a graph $G = (V, E)$,
    (2) activation probabilities $p: E \rightarrow [0,1]$,
    (3) a seed set $S \subseteq V$, the IC model 
    $\operatorname{IC}(G, p, S)$ is a stochastic process defined as follows:
    \begin{itemize} 
        \item \textbf{Initialization:}         
        At time step $t = 0$, each seed node $v_S \in S$ is activated and each non-seed node remains inactive.
        \item \textbf{Diffusion steps:}
        At each step $t \geq 1$,
        each node $v$ that is activated \textit{in the previous step} $t - 1$ 
        activates each of its inactive out-neighbor $u$ with activation probability $p(v, u)$.\footnote{The set of \textit{out-neighbors} of a node $v$ is $\{u\in V: (v, u)\in E\}$.}
        That is, each activated node remains active for the whole process but can only activate other nodes one step after its activation.
        The process terminates when no node is activated in the previous step.
    \end{itemize}
\end{definition}

As shown in Def.~\ref{def:IC}, the process is stochastic and the states of nodes at the termination are thus probabilistic.

\begin{definition}[Influenced probabilities and expected influence]\label{def:act_prob_and_exp_inf}
    Given $G = (V, E)$, $p: E \to [0, 1]$, and $S \subseteq V$,
    for each node $v \in V$,
    the \textit{influenced probability} of $v$, denoted by $\pi(v; G, p, S)$, is the probability of $v$ being influenced (i.e., active) when the process of $\operatorname{IC}(G, p, S)$ terminates,
    and the expected influence of the seed set $S$,
    denoted by $\sigma(S; G, p)$,
    is defined as the expected number of finally influenced nodes, i.e., $\sigma(S; G, p) \coloneqq \sum_{v \in V} \pi(v; G, p, S)$.
\end{definition}

\paragraph{Graph neural networks (GNNs).}
In GNNs, there are two main types of operators on the node features: feature transformation and propagation~\citep{zhu2021interpreting}.
A feature transformation operator transforms node features at each layer into the next layer via nonlinear transformation.
A propagation operator passes the features of a node to its neighbors,
and updates the feature of each node by aggregating the features of its neighbors,
typically in the form of $H \gets \Tilde{A}H$, where 
$H$ is a node feature matrix, and
$\Tilde{A}$ is a normalized adjacency matrix.
The strength of different edges (i.e., edge weights or edge probabilities) can be incorporated in the entries of $\Tilde{A}$.

\section{Related Work}
\label{sec:related}
\paragraph{Influence estimation.}
As mentioned in Sec.~\ref{sec:intro},
exactly computing the influence under the IC model is costly, and thus
several methods have been proposed for influence estimation.
The most related approach is by~\citet{ko2020monstor}, where graph neural networks (GNNs) are used to learn the influence under the IC model.
The Monte Carlo (MC) simulation (i.e., taking the mean value of samplings) has been a common practice for influence computation~\citep{zhou2013ublf,yang2019influence,manouchehri2021temporal}.
\citet{Kimura2009blocking} used the bond percolation method, and \citet{yi2022edge} used the reverse influence sampling instead of MC, but samplings are still required.

\paragraph{Influence minimization.}
Influence minimization (\influenceMinimization) has been widely studied, and it has several different variants.
Researchers have considered node removal~\citep{zhu2021misinformation, ni2023misinformation}
and edge removal~\citep{Kimura2009blocking, tong2012gelling};
and different models other than IC~\citep{dai2022minimizing} have also been considered (note that we also considered other models; see Appendix~B).
Moreover, prior studies have explored blocking propagation to specific targets~\citep{jiang2022rumordecay,wang2020efficient} and/or without specific seed sets~\citep{zareie2022rumour}, as well as
active defense by propagating opposite information~\citep{budak2011limiting, luo2014time}.
In this work, we consider edge removal under the IC model. While
\citet{yan2019rumor} used the same problem formulation, their analysis, and proposed method were limited to acyclic graphs.
Due to the difficulty of influence computation or even estimation, different heuristics without direct influence estimation have also been considered.
They proposed considering the incremental differences when removing each edge, assuming that the input graph is acyclic.
\citet{tong2012gelling} proposed to choose the edges according to the leading eigenvalues of the adjacency matrix, but the effect of seed nodes is not considered in the method.
See Appendix~D 
for more details on existing \influenceMinimization methods.
Note that no existing method for \influenceMinimization has considered a differentiable learning scheme, which is a novel approach introduced in this work.

\section{Problem Statement and Hardness}
\label{sec:pro_state_hard}

As mentioned in Sec.~\ref{sec:related}, there are different problem formulations for influence minimization (\influenceMinimization).
One can consider different graph manipulation (edge removal or node removal) and different diffusion models.
In this work, we consider the formulation with \textit{edge removal} under the \textit{independent cascade} (IC) model (see Sec.~\ref{sec:prelim}) due to the following reasons:
\begin{itemize}  
    \item The formulation with edge removal is more \textit{general} than that with node removal.
    Specifically, node removal can be seen as edge removal with additional constraints that the edges incident to a node should be all kept or all removed.
    \item Blocking spread between users (i.e., edge removal), such as through contact restriction, is often more feasible than completely removing a user (i.e., node removal).
    \item Regarding the diffusion model, the IC model has been widely considered for the spread of information (e.g., rumors)~\citep{tripathy2010study,xu2015scalable,shelke2019source}
    due to its simple yet realistic nature.   
    However, note that \textbf{our proposed approach is not limited to the IC model} but can be applied to more diffusion models, spec., the linear threshold (LT) model~\citep{kempe2003maximizing} and the general Markov chain susceptible-infected-recovered (G-SIR) model~\citep{yi2022edge}, as explored in Appendix~B. 
\end{itemize}
Hereafter, we simply call the considered problem \textit{influence minimization} when no confusion is likely.
\begin{problem}[influence minimization]\label{problem:rumor_blocking} \
\begin{itemize}
    \item \textbf{Given:} 
    a graph $G = (V, E)$, 
    activation probabilities $p: E \to [0, 1]$, 
    a seed set $S \subseteq V$, 
    and a budget $b \in \bbN$,
    \item \textbf{Find:} a set $\calE$ of $b$ edges, i.e., $\calE \subseteq E$ and $|\calE| = b$,
    \item \textbf{to Minimize:} the expected influence of $S$ after removing the edges in $\calE$ from G, i.e., $\sigma(S; G_{\setminus\mathcal{E}}, p_{\setminus\mathcal{E}})$ with $G_{\setminus\mathcal{E}} \coloneqq (V, E\setminus \mathcal{E})$ and
    $p_{\setminus\mathcal{E}}(e) = p(e), \forall e \in E\setminus \mathcal{E}$.
\end{itemize}
\end{problem}

We show the NP-hardness of influence minimization (see Appendix~A.1), 
and \citet{yan2019rumor} proved that influence minimization is non-submodular.
\begin{theorem}\label{thm:np_hard}
    Influence minimization (Problem~\ref{problem:rumor_blocking}) is NP-hard.
\end{theorem}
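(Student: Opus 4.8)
The plan is to prove NP-hardness via a polynomial-time reduction from \textsc{Clique} (equivalently, \textsc{Densest-$k$-Subgraph}), which is NP-complete. Given an undirected graph $H = (V_H, E_H)$ with $V_H = [m]$ and an integer $k \le m$, I would build an \influenceMinimization instance $(G, p, S, b)$ and a threshold $\tau$ so that $H$ has a clique of size $k$ if and only if some removal of $b$ edges makes the expected influence of $S$ at most $\tau$; since \influenceMinimization asks to minimize this influence, its NP-hardness follows.

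I would take $G$ to have a single seed $s$ (so $S = \{s\}$), one ``hub'' node $h_i$ with an edge $(s, h_i)$ for each $i \in [m]$, and, for each $H$-edge $e = \{i,j\}$, a ``pod'' of $N$ fresh sink nodes, each receiving an edge from both $h_i$ and $h_j$ (a fixed $N \ge 1$, e.g.\ $N=1$, suffices). Setting every activation probability to $1$ makes the IC process deterministic, so $\pi(v; G_{\setminus\mathcal{E}}, p, S)$ is just the indicator that $v$ is reachable from $s$ in $G_{\setminus\mathcal{E}}$, and $\sigma(S; G_{\setminus\mathcal{E}}, p)$ equals the number of nodes reachable from $s$ there. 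I would set $b = k$ and $\tau = \lvert V\rvert - k - N\binom{k}{2}$ with $\lvert V\rvert = 1 + m + N\lvert E_H\rvert$.

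For correctness I would argue: (i) if $\mathcal{E} = \{(s,h_i) : i \in R\}$ for $R$ of size $k$, then exactly the nodes $\{h_i : i \in R\}$ together with the pod nodes of $H$-edges having both endpoints in $R$ become unreachable, so $\sigma(S; G_{\setminus\mathcal{E}}, p) = \lvert V\rvert - k - N\lvert E_H(R)\rvert$, where $E_H(R)$ is the set of edges induced by $R$; and (ii) no budget-$k$ deletion does better --- deleting any edge into a pod node saves at most one node, so a solution that deletes $j_1$ hub edges (for a vertex set $R'$) and $k - j_1$ other edges saves at most $j_1 + N\lvert E_H(R')\rvert + (k-j_1) = k + N\lvert E_H(R')\rvert \le k + N\max_{\lvert R\rvert = k}\lvert E_H(R)\rvert$, using that $\lvert E_H(\cdot)\rvert$ is monotone under enlarging $R'$ to size $k$. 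Hence $\min_{\lvert\mathcal{E}\rvert=k}\sigma(S; G_{\setminus\mathcal{E}}, p) = \lvert V\rvert - k - N\max_{\lvert R\rvert=k}\lvert E_H(R)\rvert$, which is $\le \tau$ exactly when some $k$ vertices of $H$ induce at least $\binom{k}{2}$ edges, i.e.\ form a clique.

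I expect the main obstacle to be part (ii): one must enumerate and dismiss every alternative way of spending the budget --- partially cutting pods, trading $(s,h_i)$ deletions for pod-edge deletions, or arbitrary mixtures --- and show none beats deleting $k$ hub edges. Keeping the pod gadget minimal (each pod node adjacent to exactly its two hubs, pods being sinks) is what makes this clean, since then every non-hub deletion demonstrably saves at most one node. The remaining pieces --- polynomial size of $G$, the deterministic-IC reformulation of $\sigma$, and the clique $\Leftrightarrow$ induced-edge-count equivalence --- are routine.
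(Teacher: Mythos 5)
Your reduction is correct, but it takes a genuinely different route from the paper's. The paper reduces from \emph{minimum $k$-union}: a single seed $v_{init}$, one node per set, one node per universe element, all activation probabilities equal to $1$, and budget $m-k$; choosing which seed-to-set edges to remove corresponds to choosing which $k$ sets to keep, and the correctness argument is a one-line exchange step (removing any set-to-element edge can be replaced by removing the corresponding seed-to-set edge without loss). You instead reduce from \textsc{Clique} (via a densest-$k$-subgraph-style threshold): hubs per vertex, sink ``pods'' per edge, probability-$1$ edges, budget $k$, and threshold $\tau=\lvert V\rvert-k-N\binom{k}{2}$. Structurally the two gadgets are close cousins (single seed, two-layer deterministic DAG, and the insight that seed-adjacent removals dominate), but your correctness argument is a budget-accounting bound --- $j_1$ hub removals plus $k-j_1$ pod-edge removals save at most $k+N\lvert E_H(R')\rvert$ nodes, and $\lvert E_H(R')\rvert\ge\binom{k}{2}$ with $\lvert R'\rvert\le k$ forces a $k$-clique --- rather than the paper's edge-replacement argument. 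The step you flag as the main obstacle (part (ii)) is in fact already settled by that accounting, since pod-edge removals never disconnect hubs and each contributes at most one saved pod node; also note $N=1$ suffices and no amplification is needed. What each approach buys: the paper's source problem makes the dominance argument essentially immediate, while yours starts from the more classical \textsc{Clique} and yields a clean decision-threshold formulation, at the cost of the slightly longer counting argument.
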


\begin{theorem}[\citepnop{yan2019rumor}] \label{thm:non_subm}
    Influence minimization is non-submodular, i.e., $f(\calE; G,$ $p, S) \coloneqq 
\sigma(S; G, p) - 
\sigma(S; G_{\setminus\mathcal{E}}, p_{\setminus\mathcal{E}})$ is not submodular w.r.t. $\calE$. 
\end{theorem}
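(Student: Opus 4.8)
The statement is due to \citet{yan2019rumor}; the plan is to re-derive it by producing an explicit counterexample. Recall that $f(\calE; G, p, S) = \sigma(S; G, p) - \sigma(S; G_{\setminus\calE}, p_{\setminus\calE})$ is the amount of influence \emph{eliminated} by deleting the edge set $\calE$. Submodularity of $f$ would mean diminishing marginal returns: $f(\calE \cup \Set{e}) - f(\calE) \ge f(\calF \cup \Set{e}) - f(\calF)$ whenever $\calE \subseteq \calF$ and $e \notin \calF$. Hence it suffices to exhibit a single instance $(G, p, S)$, nested sets $\calE \subseteq \calF$, and an edge $e$ for which this inequality is \emph{reversed}, i.e., the marginal value of $e$ strictly \emph{increases} as the base set grows.

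The gadget I would use captures redundancy of propagation paths. Let $V = \Set{s, a, b, t}$ and $E = \Set{(s,a), (a,t), (s,b), (b,t)}$ (two internally disjoint length-two paths from $s$ to $t$), set $S = \Set{s}$, and assign every edge activation probability $1$, so that the IC process is deterministic and $\sigma(\cdot)$ merely counts reachable nodes. Take $e = (a,t)$, $\calE = \emptyset$, and $\calF = \Set{(b,t)}$. The two edges $(a,t)$ and $(b,t)$ are \emph{perfect complements}: deleting either alone leaves $t$ reachable through the other path, whereas deleting both disconnects $t$ from $s$.

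The verification is then a one-line computation. In $G$ all four nodes are influenced, so $\sigma(S; G, p) = 4$; deleting only $(a,t)$ or only $(b,t)$ still influences all four nodes, so $f(\Set{(a,t)}) = f(\Set{(b,t)}) = 0$; deleting both drops $t$, so $f(\Set{(a,t),(b,t)}) = 1$. Thus the marginal value of $e$ over $\calE = \emptyset$ is $0$, while its marginal value over $\calF = \Set{(b,t)}$ is $1 > 0$, which violates submodularity. The same conclusion holds with every edge probability equal to an arbitrary $q \in (0,1]$, where one checks $f(\Set{(a,t)}) = f(\Set{(b,t)}) = q^2 - q^4$ and $f(\Set{(a,t),(b,t)}) = 2q^2 - q^4$, so the marginal value of $e$ rises from $q^2 - q^4$ to $q^2$.

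I do not anticipate a genuine obstacle: the only real content is selecting a topology that exhibits \emph{synergy} between deletions rather than redundancy, since $f$ is monotone (deleting more edges can only remove more influence) and one might naively expect diminishing returns; everything after choosing the gadget is elementary arithmetic. The one subtlety worth flagging is that, for probabilities in the open interval $(0,1)$, the counterexample must survive the stochasticity of IC — which is why I would record the explicit values for general $q$ above rather than only the deterministic case.
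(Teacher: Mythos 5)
Your proposal is correct: the two-parallel-paths gadget with $S=\Set{s}$ does exhibit strictly increasing marginal returns for $f$ (marginal value of $(a,t)$ is $0$ over $\emptyset$ but $1$ over $\Set{(b,t)}$ when $p\equiv 1$, and $q^2-q^4$ versus $q^2$ for general $q\in(0,1)$), which is exactly a violation of submodularity. Note that the paper itself gives no proof of this theorem — it is imported by citation from \citet{yan2019rumor} — so your explicit counterexample is a valid self-contained derivation, and it is in the same counterexample spirit (synergy between deletions of redundant parallel paths) as the cited source.
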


\section{Proposed Method}
\label{sec:method}

Thms.~\ref{thm:np_hard}-\ref{thm:non_subm} show the non-triviality of influence minimization (\influenceMinimization; Problem~\ref{problem:rumor_blocking}).
Below, we further analyze the challenges in \influenceMinimization and propose our method, \method, to address them.

\subsection{Naive algorithms and their problems}\label{subsec:method:naive_greedy}
Before introducing our method \method, we {discuss} some naive algorithms,
and analyze their problems.

A naive enumeration algorithm evaluates all possible combinations of $b$ edges and chooses the best combination.
This requires computing influence for $\binom{\Abs{E}}{b} = \Theta(\Abs{E}^{b})$ times, which is computationally prohibitive.
One can reduce the frequency of influence computation by adapting it in an incremental manner.
There are $b$ rounds in total, and in each round, we choose an edge whose removal reduces the expected influence of $S$ most.
See Alg.~2 
in Appendix~C 
for the pseudo-code of such an incremental greedy algorithm.
Although such an idea requires computing influence for only $O(b\Abs{E})$ times, as discussed in Secs.~\ref{sec:intro} \& \ref{sec:related}, the exact computation of expected influence is computationally prohibitive,
and the existing estimation methods, e.g., Monte Carlo (MC) simulation, are still time-consuming because they require extensive sampling.
Below, we shall propose multiple schemes to speed up the process of choosing each edge.

\subsection{\NAIVE: Surrogate modeling for efficient influence estimation without simulation}
\label{subsec:method:ours_greedy}
We shall first address the problem of time-consuming estimation of influence.
We propose to use graph neural networks (GNNs), which are computationally efficient.

The high-level idea is to use a GNN as a {surrogate model} (neural approximation) of the influenced probabilities $\pi(\cdot)$ (see Def.~\ref{def:act_prob_and_exp_inf}).
That is, we see $\pi$ as a black-box function,
and we aim to train a $\GNN_{\theta}$ parameterized by $\theta$ such that
$\GNN_{\theta}(v; G, p, S) \approx \pi(v; G, p, S), \forall G, p, S, v$.
{Specifically, $p$ is used as edge weights and $S$ is represented by one-dimensional binary node features.}
Although GNN training introduces additional overhead, it is affordable since it can be done once in advance on existing {or randomly generated} data.
Once trained, $\GNN_\theta$ efficiently estimates the influenced probabilities $\GNN_{\theta}(v'; G', p', S') \approx \pi(v'; G', p', S')$ of new unseen cases (see Sec.~\ref{sec:anal} for complexity analysis), avoiding time-consuming MC simulation or other estimation methods (see Sec.~\ref{sec:related} for examples).

Notably, even with a single input graph, we are able to generate multiple data points by generating different seed sets,
and the training process can be easily extended to multiple graphs.
We first obtain the influence of each training seed set by MC simulation 
and use it as the ``ground-truth'' influence, 
and then update the parameters of the GNN w.r.t. the difference between the predicted influence by the GNN and the ``ground-truth'' {influence.}
Specifically, for each seed set $S_i$, the L2-loss is used, i.e.,
\begin{multline*}
    \mathcal{L}(\GNN_\theta; \Tilde{\pi}, \Set{S_i}, G, p) \\ \coloneqq \sqrt{\sum\nolimits_{v \in V} (\GNN_\theta(v; G, p, S_i) - \Tilde{\pi}(v; G, p, S_i))^2}.
\end{multline*}
The final loss function is averaged over seed sets, i.e.,
\begin{equation*}\label{eq:gnn}
   \resizebox{1.0\hsize}{!}{%
    $\mathcal{L}(\GNN_\theta; \Tilde{\pi}, \calS, G, p) \coloneqq \frac{1}{\Abs{\calS}} {\sum_{S_i \in \calS}{\mathcal{L}(\GNN_\theta; \Tilde{\pi}, \Set{S_i}, G, p)}}$.
  }
\end{equation*}
After training, we can use the trained GNN as a {surrogate model} for each test instance $\calT = (G, p, S, b)$ (see Problem~\ref{problem:rumor_blocking}), while still following an incremental greedy scheme, i.e., choosing the edges one by one with the highest effect estimated by the trained GNN, which results in \NAIVE.
See Alg.~\ref{algo:adv} for pseudo-code of \naive, and see
Alg.~3
in Appendix~C
for that of GNN training.

\subsection{\ADV: Continuous relaxation of edge removal without individual removal evaluation}\label{subsec:method:ours_optim}
Although \naive accelerates influence estimation using GNNs as a surrogate model, \naive still needs to evaluate each individual edge removal (spec., compute the estimated influence when each edge is removed), which can still take considerable time even with efficient influence estimation.
To this end, we propose to use \textit{continuous relaxation} of edge removal to avoid individual edge-removal evaluation.

\paragraph{Continuous relaxation.}
The high-level idea is that, for each edge $e = (v, u)$, instead of a binary decision $r(v, u) = \mathbb{1}((v, u) \notin \calE) \in \Set{0, 1}$ (i.e., to keep it or not; recall $\calE$ is the set of edges to be removed), we consider a \textit{probabilistic} decision
$\Tilde{r}(v, u) = \Pr[(v, u) \notin \calE] \in [0, 1]$, i.e., the probability of keeping $e$.
Such relaxation can be readily incorporated into our surrogate-model-based influence estimation.
Specifically, by the multiplication rule:
$\Pr[\text{$v$ activates $u$ via $(v, u)$}] = 
\Pr[\text{$v$ activates $u$} \land \text{$(v, u)$ is kept}] = p(v, u) \tilde{r}(v, u)$.

\begin{algorithm}[t]
    \caption{\NAIVE~/ \ADV / \ADVP}\label{algo:adv}
    \SetKwInput{KwInput}{Input}
    \SetKwInput{KwOutput}{Output}    
    \KwInput{(1) $G =  (V, E)$: an input graph \\
    \quad\quad\quad (2) $p: E \to [0, 1]$: activation probabilities \\
    \quad\quad\quad (3) $S \subseteq V$: a seed set \\
    \quad\quad\quad (4) $b$: an edge-removal budget \\
    \quad\quad\quad (5) $\GNN_{\theta}$: a trained GNN \\
    \quad\quad\quad (6) $\Tilde{r}: E \to [0, 1]$: initial probabilistic decisions

    \hspace*{0pt}\hfill $\triangleright$\ For \ADV and \ADVP \\
    \quad\quad\quad (7) $n_{ep}$: the number of epochs for each removal 
    
    \hspace*{0pt}\hfill $\triangleright$\  For \ADV only \\
    }
    \KwOutput{$\mathcal{E} \subseteq E$: a set of edges chosen to be removed}    
    $\calT \leftarrow (G, p, S, b)$ \hspace*{0pt}\hfill $\triangleright$\ {Initialize the \influenceMinimization problem instance $\calT$}\\
    $\mathcal{E} \leftarrow \emptyset$ \hspace*{0pt}\hfill $\triangleright$\ {Initialize the set of edges to be removed} \\
    \For{$i = 1, 2, \ldots, b$}{        
        $e=$ \texttt{EdgeSelection()} or \texttt{EdgeSelection+()} or \texttt{EdgeSelection++()}
        \hspace*{0pt}\hfill $\triangleright$\ {Select an edge} \\
        $E \leftarrow E\setminus \{e\}$;        $\mathcal{E}\leftarrow\mathcal{E}\cup\{e\}$ \hspace*{0pt}\hfill $\triangleright$\ {Remove the edge} \\
        $G \leftarrow (V, E)$;
        $\calT \leftarrow (G, p, S, b)$
        \hspace*{0pt}\hfill $\triangleright$\ {Update the graph}
        \\
    }
    \Return $\mathcal{E}$
    \label{algo:adv:end}
    
    \vspace{1mm}
    
    \SetKwFunction{FMain}{EdgeSelection}
    \SetKwProg{Fn}{Function}{:}{}
    \Fn{\FMain{}}{
        \Return $\arg\min_{e\in E } \sum_{v \in V} \GNN_{\theta}(v; G, p_{\setminus \calE}, S)$\label{algo:naive:find} \\ 
    } 
    
    \vspace{1mm}
    
    \SetKwFunction{FMainp}{EdgeSelection+}
    \SetKwProg{Fn}{Function}{:}{}
    \Fn{\FMainp{}}{
        \For{$j = 1, 2, \ldots, n_{ep}$}{
            Compute the derivative $\nabla_{\Tilde{r}} \calL := \frac{\partial \calL_{O}(\Tilde{r}; \calT, \GNN_{\theta})}{\partial \Tilde{r}}$ \label{algo:adv:compute} \\
            Update $\Tilde{r}$ via gradient descent {w.r.t. $\nabla_{\Tilde{r}} \calL$}
            \label{algo:adv:update}\\
        }
        \Return $\arg\min_{e\in E} \tilde{r}(e)$ \\
    }
    
    \vspace{1mm}
    
    \SetKwFunction{FMainpp}{EdgeSelection++}
    \SetKwProg{Fn}{Function}{:}{}
    \Fn{\FMainpp{}}{
        $\nabla_{\tilde{r}}(e)=\frac{\partial \sum_{v \in V} \GNN_{\theta}(v; G, p_{\setminus \calE}, S)}{\partial \tilde{r}(e)}, \forall e \in E$\\
        \Return $\arg\max_{e\in E} \nabla_{\tilde{r}}(e)$\\
    }
\end{algorithm}

Therefore, given 
activation probabilities $p: E \to [0, 1]$ and
probabilistic decisions $\tilde{r}: E \to [0, 1]$ on the edges,
we obtain the \textit{modified activation probabilities} 
$\Tilde{p}_{\Tilde{r}}: E \to [0, 1]$
by
$\tilde{p}_{\Tilde{r}}(v, u) \coloneqq p(v, u) \Tilde{r}(v, u), \forall (v, u) \in E$.
Specifically, given 
a graph $G = (V, E)$, 
activation probabilities $p$,
a seed set $S \subseteq V$,
a trained $\GNN_{\theta}$, 
and probabilistic edge-removal decisions $\Tilde{r}$,
the influenced probability of each node $v \in V$ is estimated as
$\GNN_{\theta}(v; G, \Tilde{p}_{\Tilde{r}}, S)$.
Notably, such an estimated influenced probability is differentiable w.r.t. $\Tilde{r}$.

\begin{lemma}\label{lem:gnn_diff}
    $\GNN_{\theta}(v; G, \Tilde{p}_{\Tilde{r}}, S)$ is differentiable w.r.t $\Tilde{r}$.
    
    \noindent \textit{Proof.} \normalfont{See Appendix~A.2.} 
    \qed
\end{lemma}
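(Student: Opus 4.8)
The plan is to unfold $\GNN_\theta$ into its elementary building blocks and apply the chain rule, carefully tracking where $\Tilde{r}$ enters. Recall from the preliminaries that $\GNN_\theta$ is a finite composition of (i) feature-transformation layers, each an affine map of the node-feature matrix $H$ followed by an entrywise nonlinearity, and (ii) propagation layers of the form $H \gets \Tilde{A}H$, where the normalized adjacency matrix $\Tilde{A}$ encodes the edge strengths in its entries. When $\GNN_\theta$ is evaluated on $(G, \Tilde{p}_{\Tilde{r}}, S)$, the argument $\Tilde{r}$ enters \emph{only} through these edge strengths, i.e., through the entries $\Tilde{p}_{\Tilde{r}}(v,u) = p(v,u)\,\Tilde{r}(v,u)$; the seed set $S$ merely fixes the input node features, which are constant in $\Tilde{r}$.

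First I would observe that the map $\Tilde{r} \mapsto \Tilde{p}_{\Tilde{r}}$ is coordinatewise linear, hence $C^\infty$. Next, the map from the weighted edges $\{\Tilde{p}_{\Tilde{r}}(e)\}_{e \in E}$ to $\Tilde{A}$ is smooth: each entry of $\Tilde{A}$ is a rational function of the $\Tilde{p}_{\Tilde{r}}(e)$'s coming from the usual degree normalization (e.g., $D^{-1}$ or $D^{-1/2}(\cdot)D^{-1/2}$ with $D$ the weighted-degree matrix, possibly after self-loop augmentation), whose denominator is a sum of nonnegative terms. Composing with the bilinear map $(\Tilde{A}, H) \mapsto \Tilde{A}H$ makes each propagation layer a smooth function of $(\Tilde{r}, H)$. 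Each feature-transformation layer is affine in $H$ followed by an entrywise activation, which is $C^\infty$ for smooth activations (sigmoid, tanh, GELU, \dots) and differentiable off a measure-zero set for piecewise-linear ones (ReLU); in the latter case the claim holds in the almost-everywhere sense, consistent with the subgradient used in practice. Chaining the finitely many such layers, starting from the $\Tilde{r}$-independent input features, the chain rule then yields differentiability of $\GNN_\theta(v; G, \Tilde{p}_{\Tilde{r}}, S)$ with respect to $\Tilde{r}$.

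The one step I expect to need care is the normalization inside $\Tilde{A}$: smoothness of $\Tilde{r} \mapsto \Tilde{A}$ on all of $[0,1]^E$ requires the normalization denominators to stay bounded away from $0$, which is precisely what the standard self-loop augmentation (or a mild positivity assumption on $p$) guarantees; on the region where some denominator could degenerate one instead argues differentiability only where $\Tilde{A}$ is defined. Everything else is a routine application of the chain rule to a finite composition of smooth (or almost-everywhere differentiable) maps. \qed
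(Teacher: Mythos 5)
Your proposal follows essentially the same route as the paper's proof: both reduce the claim to the chain rule through the composition $\Tilde{r} \mapsto \Tilde{p}_{\Tilde{r}} \mapsto \GNN_{\theta}(v; G, \Tilde{p}_{\Tilde{r}}, S)$, using that $\Tilde{p}_{\Tilde{r}}(v,u) = p(v,u)\,\Tilde{r}(v,u)$ is linear in $\Tilde{r}$, so the derivative $\partial \Tilde{p}_{\Tilde{r}}(v,u)/\partial \Tilde{r}(v',u')$ is just $p(v,u)$ on the matching edge. The only difference is one of detail: the paper simply asserts that $\partial \GNN_{\theta}/\partial \Tilde{p}_{\Tilde{r}}$ is well-defined because $p$ enters the GNN as edge weights, whereas you verify this by unfolding the propagation and transformation layers (including the degree-normalization and non-smooth-activation caveats), which is a more careful elaboration of the same argument rather than a different one.
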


\paragraph{Differentiable optimization.}
After training $\GNN_{\theta}$, by Lemma~\ref{lem:gnn_diff}, we can now conduct \textit{differentiable} optimization on the probabilistic edge-removal decisions for each test instance $\calT = (G, p, S, b)$ of the \influenceMinimization problem (see Problem~\ref{problem:rumor_blocking}), which results in \ADV (see Alg.~\ref{algo:adv}).
The high-level process is as follows: we fix $\GNN_\theta$ after training, and only update $\Tilde{r}$ to minimize the loss $\calL_{O}$, which has three parts:
\begin{equation}\label{eq:loss_optimize}
    \calL_{O} := \mathcal{L}_\text{obj}
    +\alpha \calL_\text{budget}
    +\beta \calL_\text{certainty},
\end{equation}
where the loss coefficients, $\alpha$ and $\beta$, are hyperparameters.

The first part $\calL_{\text{obj}}$ is regarding the main optimization objective of \influenceMinimization (Problem~\ref{problem:rumor_blocking}), defined as
\begin{multline*}
    \calL_{\text{obj}}(\Tilde{r}; \calT, \GNN_\theta) \\
    := \textstyle{-\frac{\sum_{v \in V} (\GNN_\theta(v; G, p, S) - \GNN_\theta(v; G, \Tilde{p}_{\Tilde{r}}, S))}
    {\sum_{v \in V} \GNN_\theta(v; G, p, S) -\Abs{S}}},
\end{multline*}
which is the \textit{reduction ratio} in the estimated number of influenced non-seed nodes.
Specifically, the numerator is the estimated influence reduction after the probabilistic edge removal $\Tilde{r}$ is applied, and the denominator is the estimated number of influenced non-seed nodes before the edge removal.
When the surrogate model is perfectly accurate, minimizing $\calL_{\text{obj}}$ is equivalent to optimizing the objective of \influenceMinimization.

The second part $\calL_{\text{budget}}$ is regarding the budget constraint:
\begin{equation*}
\calL_{\text{budget}}(\Tilde{r}; \calT, \GNN_\theta) := ({\Abs{E} - \sum\nolimits_{e\in E} \Tilde{r}(e)-b})^2,
\end{equation*}
which is the squared difference between the expected number of removed {edges and the required budget.}
When the budget is exactly used, $\calL_{\text{obj}}$ is $0$, i.e., minimized.

The third part $\calL_{\text{certainty}}$ is regarding the certainty (i.e., closeness to binary) of the probabilistic decisions $\Tilde{r}$:
\begin{multline*}
    \mathcal{L}_\text{certainty}(\Tilde{r}; \calT, \GNN_\theta) \\ := \textstyle{\frac{\sum_{e\in E} (\Tilde{r}(e)\log r(e) - (1-\Tilde{r}(e))\log (1-\Tilde{r}(e)))}{\Abs{E}},}
\end{multline*}
which is inspired by the Shannon entropy,
and $\mathcal{L}_\text{certainty}$ is smaller when {each $\Tilde{r}(e)$ is closer to $0$ or $1$.}

Even with continuous relaxation, optimizing $\calL_O$ remains closely aligned with the \influenceMinimization problem, as discussed in Appendix~A.3.
In Alg.~\ref{algo:adv}, we show pseudo-code of \adv, where some details (e.g., how we initialize and normalize $\tilde{r}$) are omitted and will be deferred to Sec.~\ref{sec:experiments} when we describe the detailed experimental settings.
Given initial probabilistic decisions $\Tilde{r}$, in each iteration, we update it via gradient descent according to the loss function $\Tilde{L}_O$ (Eq.~\eqref{eq:loss_optimize}) and its derivative.
For each specified number $n_{ep}$ of epochs, the edge with the smallest $\tilde{r}(e)$ value is removed.

\subsection{\ADVP: Gradient-driven selection without test-time gradient-descent optimization}\label{subsec:method:ours_instant}
The previous two schemes entail differentiability, enabling us to compute gradients w.r.t. the probabilistic decisions on edges.
The gradient of each edge can be naturally interpreted as its ``sensitivity''.
Specifically, the influence is more sensitive to the edges with higher gradients, and removing such edges is expected to reduce the influence more effectively.
Hence, we propose \ADVP with \textit{gradient-driven edge selection}, which \textit{instantly} removes the edge with the largest gradient in each round, instead of performing optimization (spec., gradient descent) over many epochs.

Alg.~\ref{algo:adv} shows pseudo-code of \advp.
After training $\GNN_{\theta}$, for each test instance $\calT = (G = (V, E), p, S, b)$ of the \influenceMinimization problem,
we compute the derivatives on all edges,
\begin{center}
    $\nabla_{\tilde{r}}(e; \calT) \coloneqq \frac{\partial \sum_{v \in V} \GNN_\theta(v; G, p, S)}{\partial \Tilde{r}(e)}, \forall e \in E$,
\end{center}
and remove the edge with the largest $\nabla_{\tilde{r}}(e; \calT)$ in each round.

\paragraph{Discussion.}
For \ADV, instead of removing edges one by one after $n_{ep}$ epochs, one can choose the bottom-$b$ edges with the lowest $\Tilde{r}$ values at once. 
Similarly, for \ADVP, one can choose the top-$b$ edges with the largest gradient at once.
Empirically, we observe that choosing edges at once in such a way achieves similar or worse performance. 
See Appendix~G.3 
for more detailed results and discussions.

\subsection{Time and space complexities}\label{sec:anal} 
Given a test instance $\calT = (G = (V,E), p, S, b)$ of the \influenceMinimization problem,
assume that 
(1) we use graph convolutional networks (GCNs; \citepnop{kipf2016semi}) with a constant number of layers, as in our experiments, 
(2) the dimensions of hidden features are fixed as constant, which are indeed fixed and much smaller than the size of graphs in our experiments, and
(3) the input graph is sparse (specifically, $\Abs{E} = \Theta(\Abs{V})$), which is indeed true for the datasets in our experiments (see Table~\ref{tab:data}), 
we can derive the time and space complexity of \method based on existing results~\citep{chiang2019cluster,blakely2021time}.

\paragraph{\naive.}
A forward pass of GCN takes $O(\Abs{E})$ time.
In each round, we conduct $O(\Abs{E})$ forward passes, with $b$ rounds in total.
Thus, the time complexity of \naive is $O(b\Abs{E}^2)$.

\paragraph{\adv.}
A backward pass of GCN takes $O(\Abs{E})$ time,
and there are $b$ rounds each with $n_{ep}$ epochs (see Alg.~\ref{algo:adv}).
Therefore, the time complexity of \adv is $O(n_{ep}b\Abs{E})$.

\paragraph{\advp.}
We conduct one backward pass in each of the $b$ rounds, so
the time complexity of \advp is $O(b\Abs{E})$.

\paragraph{Note.}
Regarding the time complexity, 
\naive $>$ \adv $>$ \advp,
as intended.

\paragraph{Space complexity.}
For each version, it is 
$O(\Abs{E} + \Abs{V}) = O(\Abs{E})$, 
dominated by the space complexity of GCN.

\section{Experiments}
\label{sec:experiments}

\begin{table}[t]
    \centering
    \setlength{\tabcolsep}{1mm}
    \begin{tabular}{l|l|r|r|r|r}
        \hline
        \multirow{2}{*}{{dataset}} &\multirow{2}{*}{{abbr.}} &  \multicolumn{2}{c|}{{training graph}} & \multicolumn{2}{c}{{test graph}} \bigstrut \\
        \cline{3-6}        &&\textbf{$|V|$}&\textbf{$|E|$}&\textbf{$|V|$}&\textbf{$|E|$} \bigstrut\\
        \hline
        WannaCry & \WL & $16,246$ & $84,217$ & $19,381$ & $85,202$ \bigstrut[t]\\
        Celebrity & \CL & $7,848$ & $28,839$ & $7,336$ & $27,699$\\ 
        Extended & \EL & $5,636$ & $31,826$ & $5,413$ & $27,146$ \bigstrut[b]\\
        \hline
    \end{tabular}
    \caption{Basic statistics of the real-world datasets.}
    \label{tab:data}
\end{table}

We performed experiments on real-world graphs, 
aiming to answer the following questions:
\begin{itemize}  
    \item \textbf{Q1. Performance:} {How effectively and quickly does \method minimize influence?} 
    \item \textbf{Q2. Scalability:} How does the running time of \method grow with the budget $b$?
    \item \textbf{Q3. Influence estimation quality:} How well does the surrogate GNN model in \method estimate the influence?
    \item \textbf{Q4. Inductivity:} How does \method perform when trained and tested on different/same graphs? 
    \item \textbf{Q5. Ablation studies:} How does each component or algorithmic design affect the performance of \method?
\end{itemize}

\begin{figure*}[t!]
    \centering
    \includegraphics[width=0.9\linewidth]{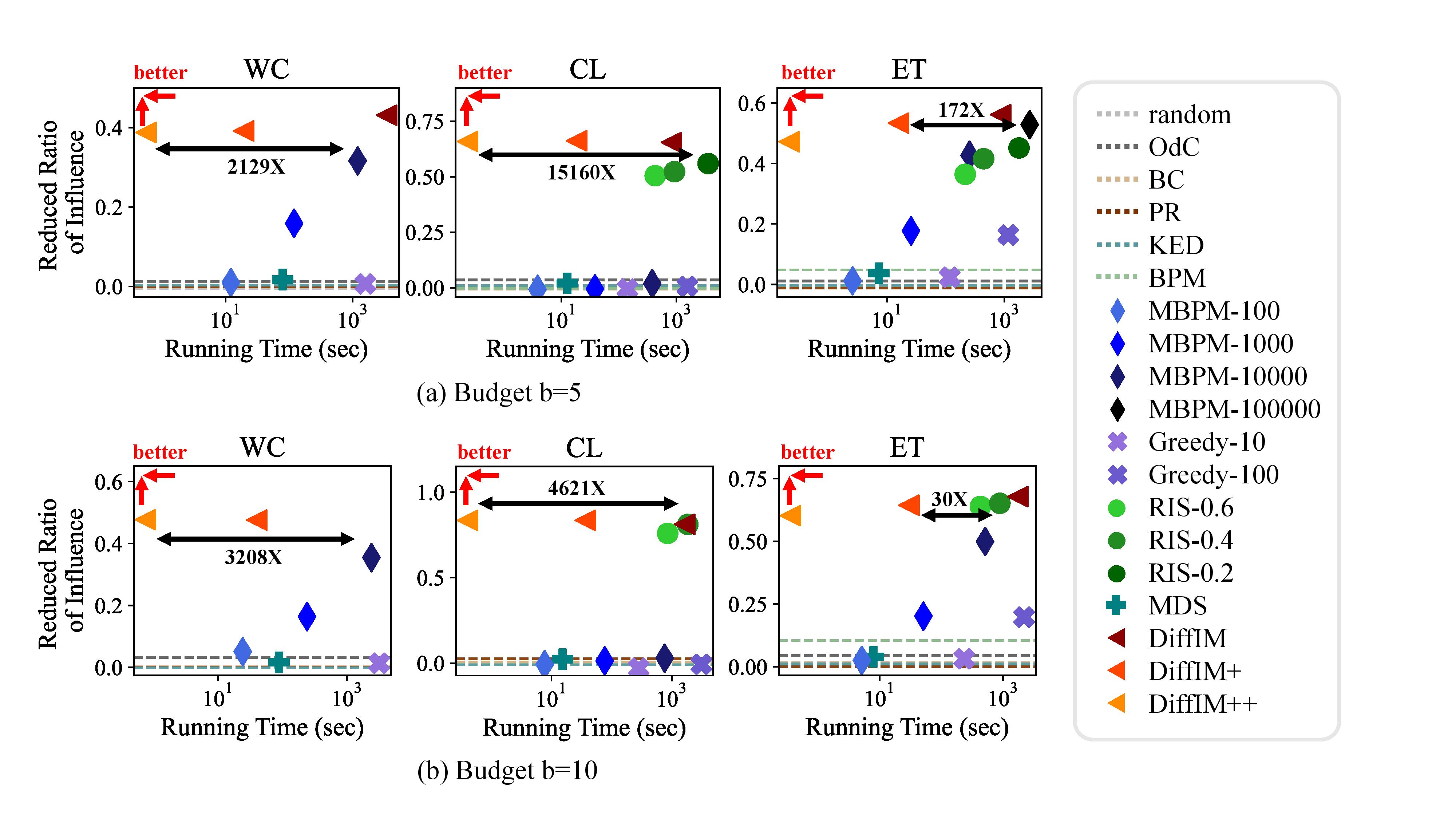}
    \caption{The effectiveness (the reduced ratio of influence) and running time of each method, with budget $b=5$ (top) and $b=10$ (bottom). 
    The baselines with outputs independent of seed sets were represented as horizontal lines.
    We compared the running time of the best baseline to one of our methods with the most similar reduced ratio, and in all cases, \method was
    30 $\times$ to 15,160 $\times$ faster.
    See  Appendix~G.1 for full results with standard deviations.}
    \label{fig:perf_6layer}
\end{figure*}

\subsection{Experimental settings}\label{subsec:exp:setting}

\paragraph{Datasets.} We used three real-world social-network datasets from~\citet{ko2020monstor}: \WL, \CL, and \EL, consisting of interactions logs, e.g., retweets among users~\citep{sabottke2015vulnerability}.
Such interactions naturally represent people's impact on others, i.e., influence.
We provided their basic statistics in Table~\ref{tab:data}.
Each dataset was split into training and test graphs based on a time threshold $t_{th}$: edges before $t_{th}$ were used for training, and those after were used for testing.

\paragraph{Baselines.} We considered the following baselines: 
\begin{enumerate}[align=left,leftmargin=*,topsep=0pt,itemsep=0pt]
    \item \textbf{\textsc{Random}} removes $b$ edges chosen uniformly at random.
    \item  \textbf{\textsc{OdC}} (Out-degree Centrality; \citepnop{kempe2003maximizing})
    removes the top-$b$ edges w.r.t. the sum of out-degrees of their two endpoints.\footnote{{The out-degree of each node $v \in V$ is $\lvert\{u\in V: (v, u)\in E\}\rvert$.}}
    \item \textbf{BC} (Betweenness Centrality; \citepnop{schneider2011suppressing})
    removes the top-$b$ edges w.r.t edge betweenness.
    \item \textbf{PR} (PageRank; \citepnop{page1998pagerank})
    removes the top-$b$ edges w.r.t. the sum of the PageRank scores of their endpoints.
    \item \textbf{\KED}~\cite{tong2012gelling} removes the $b$ edges {to minimize} the leading eigenvalue of the adjacency matrix.
    \item \textbf{\MDS}~\cite{yan2019rumor} greedily removes $b$ edges w.r.t. the importance scores estimated by influenced probability and rumor-spread ability of their endpoints.
    \item \textbf{BPM}~\cite{Kimura2009blocking} 
    removes the top-$b$ edges w.r.t. importance scores estimated using the bond percolation method (BPM).
    BPM does not consider specific seed nodes.
    \item \textbf{Modified \BPM (\MBPM)} is a modified version of \BPM that considers specific seed nodes.
    \item \textbf{\Greedy} removes $b$ edges greedily with influence estimated by Monte Carlo (MC) simulation.
    \item \textbf{\RIS}~\cite{yi2022edge} estimates the importance of each edge for propagating to other nodes using sampling, and removes the $b$ most important edges.
\end{enumerate}
For MBPM and \Greedy, a number after their names denotes the number of samplings (e.g., \Greedy-100 denotes \Greedy with $100$ samplings). 
For \RIS, a small $\epsilon$ increases sampling numbers and improves edge importance estimation accuracy.
Note that all these baselines approach \influenceMinimization as a discrete combinatorial optimization problem. See Appendix~E for more details on the baselines.

\paragraph{\method.}
For each training graph, we generated 1,000 random seed sets, using 800 for training and 200 for validation.
For each test graph, we generated 50 random seed sets and reported average performance, where we used 10,000 Monte Carlo simulations as the ``ground-truth'' influence, following the settings by~\citet{kempe2003maximizing}.
We consistently used a graph convolutional network (GCN) with six layers and a final fully connected layer.
For \adv, the probabilistic decisions $\Tilde{r}$ were optimized (see Alg.~\ref{algo:adv}) in $n_{ep}=100$ epochs for each removal with $\alpha = 0.1$ and $\beta = 1$.

\paragraph{Seed set generations.}
For each seed set,
the size was sampled uniformly between 10 and $\lfloor 0.01|V| \rfloor$ (inclusive),
and then the nodes were sampled uniformly.

\paragraph{Probabilistic decisions.}
The probabilistic decisions $\Tilde{r}$ were normalized by a sigmoid function $\sigma_\text{sigmoid}(x)=\frac{1}{1+e^{-x}} \in [0, 1]$. We used initial probabilistic decisions $\tilde{r}(e)=\sigma_\text{sigmoid}(x)=1-\frac{b}{|E|},\forall e\in E$, which makes the loss $\calL_{\text{budget}}$ regarding the budget constraint to be $0$ (see Sec.~\ref{subsec:method:ours_optim}).
See Appendix~F for more details of the experimental settings, e.g., hardware information.

\subsection{\textbf{Q1. Performance}}\label{sec:exp_perf}
We shall show that \method showed good performance in influence minimizing (IMIN; Problem~\ref{problem:rumor_blocking}) in terms of effectiveness (the reduced ratio of influence) and efficiency (the running time).
Formally, the reduced ratio $R_r$ is defined as
\begin{equation}\label{eq:reduced_ratio}
 R_r(\calE; G, p, S) \coloneqq \textstyle{\frac{\sigma(S; G, p)-\sigma(S; G_{\setminus{\mathcal{E}}},p_{\setminus{\mathcal{E}}})}{\sigma(S; G,p)-\Abs{S}}.}
\end{equation}
A higher $R_r$ implies higher effectiveness in \influenceMinimization.

\begin{figure}[t!]
    \centering
        \includegraphics[width=\linewidth]{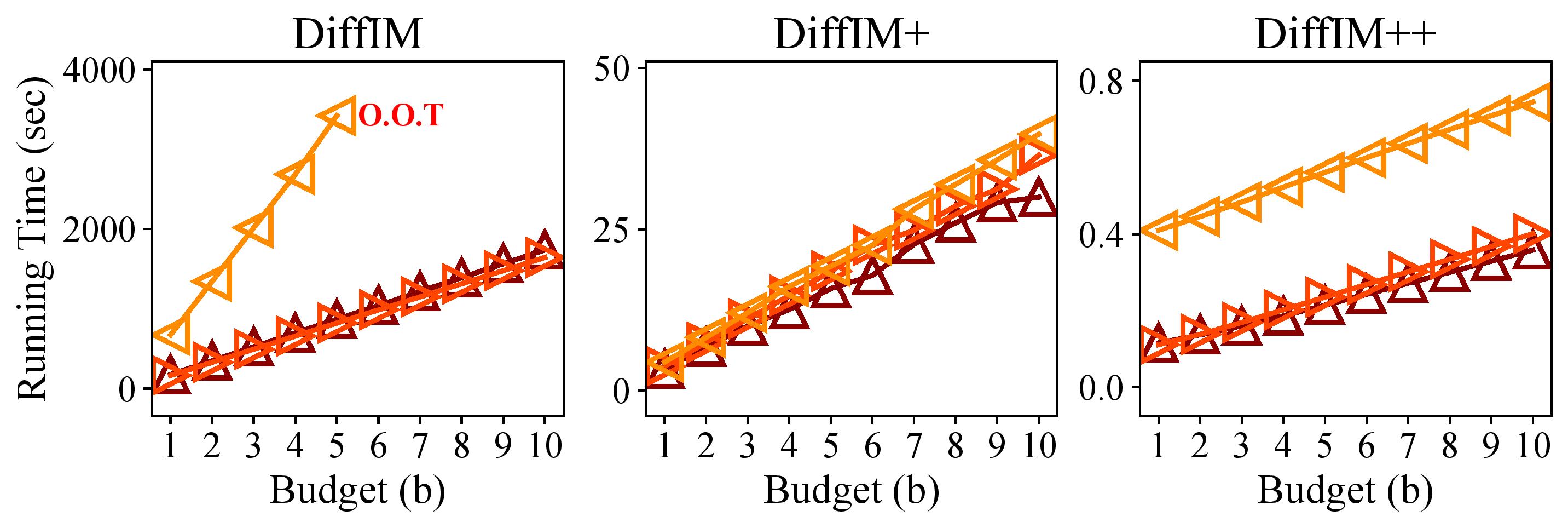}
        \includegraphics[width=0.4\linewidth]{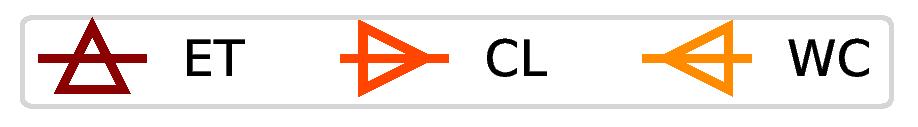}
    \caption{The running time of each \method version when budget $b$ increases from $1$ to $10$. 
    The running time of each version grew linearly with $b$, showing good scalability.}
    \label{fig:budget_6layer}
\end{figure}

\begin{figure}[t!]
    \centering
        \includegraphics[width=1\linewidth]{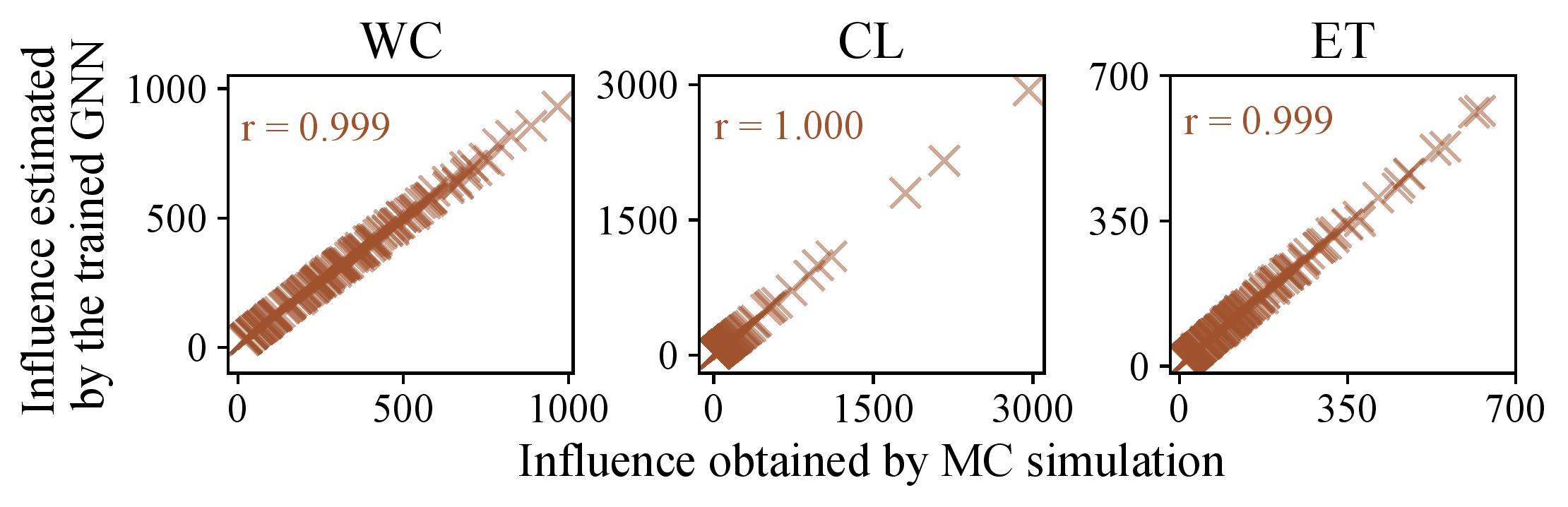}
    \caption{
    The Pearson correlation coefficients
    between the ground-truth influence of the validation sets and that estimated by MC simulation of a trained GNN.
    In all the cases, trained GNNs estimated influences near-perfectly.}
    \label{fig:Rvalue_6layer}
\end{figure}

In Fig.~\ref{fig:perf_6layer}, we reported the average reduced ratio of influence and running time across all the test seed sets for each dataset and method, using budgets $b \in \Set{5, 10}$.
Methods taking more than one hour on a single seed set were considered ``out of time'' and excluded.
Several baselines (\textsc{random}, \textsc{OdC}, BC, PR, \BPM, and \KED) do not depend on the seed set, and can be executed once for all seed sets.
Hence, we represented the performance of each of those methods as a horizontal line.
All the versions of \method were Pareto-optimal, i.e., no baseline was faster and more effective than any version, with at least one version outperforming all the baselines w.r.t. effectiveness in each case.
In most cases, all the versions outperformed all the baselines w.r.t. effectiveness.
Specifically, \method achieves 30 $\times$ to 15,160 $\times$ speed-up with similar effectiveness.
Notably, for \WL, \RIS-$\epsilon$ ran out of time for each $\epsilon \in \{0.2, 0.4, 0.6\}$.
As intended, the more proposed schemes we use, the higher speed we have (w.r.t. speed, \naive $<$ \adv $<$ \advp).
The effectiveness of different versions did not vary much, yet we observed an overall trend: \naive $>$ \adv $>$ \advp.
See Fig.~5 and Table~7 in Appendix~G.1 for the full results with standard deviations with different budgets $b \in \Set{3,5,7,10}$.

\subsection{\textbf{Q2. Scalability}}\label{sec:exp_sca}
In Fig.~\ref{fig:budget_6layer}, we reported the running time of all \method versions on each dataset with different budgets.
As the budget $b$ increased from 1 to 10, the running time increased almost linearly for all versions, which validated our analysis on the time complexity in Sec.~\ref{sec:anal}.
Our analysis in Sec.~\ref{sec:anal} also implies that \adv and \advp have better scalability w.r.t. the input graph size than \naive. 
Specifically, the time complexity of \naive is quadratic in $\Abs{E}$ while that of \adv and \advp is linear, which was also validated in Fig.~\ref{fig:budget_6layer},
where the running time gap between \naive and the other two versions increased on the larger dataset \WL.
In Table 9 in Appendix~G.2, for each method and each dataset, we provided the minimum budget $b$ for the method to run out of time (i.e., take more than one hour on a single seed set).

\subsection{Q3. Influence estimation quality}\label{sec:exp_train}
We shall show that our surrogate GNNs were trained well for influence estimation.
In Fig.~\ref{fig:Rvalue_6layer}, for the validation seed sets (recall that for each training graph, we generated 1,000 random seed sets, using 800 training and 200 for validation; see Sec.~\ref{subsec:exp:setting}), we reported the 
Pearson correlation coefficients (Pearson's $r$)
between the influence obtained by MC simulation (seen as the ground truth) and that estimated by the trained GNNs.
The estimation by the GNNs was highly correlated with the ground truth.
Specifically, the trained GNNs achieved a Pearson's $r$ of $0.999$ or higher on each dataset.
See Appendix~G.4 for more details, e.g., how the estimation errors decreased along training.

We reported the training times for GNNs in Table~\ref{tab:GNN_time_training} and compared the influence estimation times of GNNs with MC simulations in Table~\ref{tab:GNN_time_running}. 
Overall, the estimation times of GNNs were at least 100$\times$ faster than those of the MC simulations.

\begin{table}[t!]
    \centering  
    \begin{tabular}{c|r|r|r}
        \hline
        datasets & \multicolumn{1}{c|}{\WL}   & \multicolumn{1}{c|}{\CL}   & \multicolumn{1}{c}{\EL} \bigstrut \\
        \hline
        time (in seconds) &  17,227 & 7,390 & 6,818 \bigstrut \\
        \hline
    \end{tabular}
    \caption{Average training time (in seconds) for GNNs.}
    \label{tab:GNN_time_training}
\end{table}

\begin{table}[t!]
    \centering  
        \begin{tabular}{c|c|c|c}
        \hline
        estimation method   & \WL   & \CL   & \EL   \bigstrut \\
        \hline
        GNN & 0.0082 & 0.0056 & 0.0056 \bigstrut[b]  \bigstrut \\
        MC simulation & 3.7757 & 0.8276  & 0.7817 \bigstrut \\
        \hline
    \end{tabular}
    \caption{Average time (in seconds) to estimate influence for each seed set using GNNs and MC simulations. MC simulations were repeated 10,000 times for estimation.}
    \label{tab:GNN_time_running}
\end{table}

\begin{table*}[t!]
    \setlength{\tabcolsep}{1mm}
    \centering
    
    \begin{tabular}{l||c|c|c||c|c|c||c|c|c}
        \hline
         method & \multicolumn{3}{c||}{\naive} &\multicolumn{3}{c||}{\adv} & \multicolumn{3}{c}{\advp} \bigstrut \\ 
        \hline
        dataset & \WL & \CL & \EL & \WL & \CL & \EL & \WL & \CL & \EL  \bigstrut \\
        \hline
        transductive
        &0.4311 &0.6547 &0.5613 
        &0.3914 &0.6614 &0.5332 
        &0.3876 &0.6583 &0.4718 \bigstrut[t] \\
        inductive
        &0.4256 &0.6394 &0.5429 
        &0.3910 &0.6534 &0.5045 
        &0.3692 &0.6230 &0.5050 
        \\
        (difference)&(-1.3\%)&(-2.3\%)&(-3.3\%)
        &(-0.1\%)&(-1.2\%)&(-5.4\%)
        &(-4.7\%)&(-5.4\%)&(+7.0\%) \bigstrut[b]\\
        \hline
        \makecell[l]{strongest baseline (transductive)}
        &0.3160 &0.5591 &0.5284 
        &0.3160 &0.5591 &0.5284 
        &0.3160 &0.5591 &0.5284 
        \bigstrut \\
    \hline
    
    \end{tabular}
    \caption{The effectiveness (the reduced ratio of influence) of each \method version {when budget $b=5$}, and the comparison between the performance in transductive and inductive settings, with each percentage being the difference ratio (negative means the effectiveness in inductive settings was lower).
    In all the cases, the effectiveness of each \method version was only slightly lower (or even higher) in the inductive setting. Importantly, \method outperformed all the baselines in most cases in both transductive and inductive settings.
    \label{tab:induct_6layer}
    }
\end{table*}

\begin{table}[t]
    \centering
    \begin{tabular}{l|c|c|c}
        \hline
        loss & \WL & \CL & \EL  \bigstrut \\
        \hline
        original     &\textbf{0.3914} &\textbf{0.6614} &\textbf{0.5332} \bigstrut[t]\\
        $-\mathcal{L}_{\text{budget}}$     &0.0052 &0.0112 &0.0170 \\
        $-\mathcal{L}_{\text{certainty}}$  &0.3891 &0.6479 &0.5282 \bigstrut[b] \\
    \hline
    \end{tabular}
    \caption{The effectiveness (the reduced ratio of influence) of \adv on budget $b=5$ when removing different parts of the loss function (Eq.~\eqref{eq:loss_optimize}).
    Each part was helpful, and $\calL_{\text{budget}}$ was much more helpful than that of $\calL_{\text{certainty}}$.
    }
    \label{tab:ab_6layer}
\end{table}

\subsection{Q4. Inductivity}\label{sec:exp_induct}
In Table~\ref{tab:induct_6layer}, we compared the effectiveness (the reduced ratio of influence; see Eq.~\eqref{eq:reduced_ratio}) of all \method  versions in the transductive setting 
(trained and tested on the same dataset) 
and the inductive setting
(trained and tested on different datasets) {when budget $b=5$}.
Note that even for the transductive setting, the training and test graphs were different (spec., different timestamps), and the seed sets were different (see Sec.~\ref{subsec:exp:setting}).
In the inductive setting, on each dataset, we tested the two models that were trained on the other two datasets. 
The reported performance was the average of the results from these two models.
In most cases, the effectiveness of our methods was higher in the transductive setting than in the inductive setting.
From transductive to inductive settings, their effectiveness dropped by at most 5.4\%.
In most cases and settings, the effectiveness of our methods is higher than the most effective baseline (which was transductive and much slower than \adv and \advp), showing the good inductivity of our methods.

\subsection{Q5. Ablation studies}\label{subsec:exp:ablation}
We evaluated the importance of $\calL_{\text{budget}}$ and $\calL_{\text{certainty}}$ in the loss function of \adv (see Eq.~\eqref{eq:loss_optimize}).
In Table~\ref{tab:ab_6layer}, we compared the effectiveness (the reduced ratio of influence) of \adv on budget $b=5$ when using the whole loss function and when removing $\calL_{\text{budget}}$ or $\calL_{\text{certainty}}$.
We observed the effectiveness of \adv droped significantly without $\calL_{\text{budget}}$, showing the significance of $\calL_{\text{budget}}$.
{The effectiveness of $\calL_{\text{certainty}}$ was marginal, but it was necessary for theoretical guarantees (see Lem.~2 in Appendix~A.3).}

\subsection{Additional experiments}
We also conducted experiments on \textbf{two more influence diffusion models}, specifically, the linear threshold (LT) model~\citep{kempe2003maximizing} and the general Markov chain susceptible-infected-recovered (G-SIR) model~\citep{yi2022edge}.
On these models, \method still showed empirical superiority, being significantly faster than the most effective baseline, while achieving a similar reduced ratio of influence.
See Appendix~B for more details.

In addition, we conducted experiments on large-scale datasets.
Due to the absence of realistic activation probabilities, for these datasets, we used the weighted cascade model~\citep{kempe2003maximizing}, a special case of the IC model where the activation probability of each edge from node $u$ to $v$ is $1$ divided by the in-degree of $v$.
On these datasets, many strong baselines ran out of time or memory, and \method consistently outperformed those that completed within the given limits.
See Appendix~G.5 for details.

Furthermore, in Appendix~G, we presented additional experimental results on (1) the detailed trade-off between time and reduction ratio, (2) scalability w.r.t. budgets, and (3) comparisons with variants of \method missing certain components. Overall, we demonstrate the superiority of \method over baseline methods in terms of trade-offs and scalability, and the importance of each component in \method.

\section{Conclusions}
\label{sec:conclusion}
In this work, we studied influence minimization (\influenceMinimization) with edge removal under the independent cascade (IC) model, with more models discussed in Appendix~B.
We proposed \method (Sec.~\ref{sec:method}), which incorporates two key schemes:
\textit{surrogate modeling} for efficient influence estimation (Sec.~\ref{subsec:method:ours_greedy}) 
and \textit{continuous relaxation} of edge removal (Sec.~\ref{subsec:method:ours_optim}).
Additionally, we proposed \textit{gradient-driven edge selection} for instant edge selection without test-time gradient descent iterations (Sec.~\ref{subsec:method:ours_instant}).
Our extensive experiments demonstrated that all three schemes improved the speed of \method with little (or even no) \influenceMinimization performance degradation, in addition to its superior speed and effectiveness over baselines (Sec.~\ref{sec:exp_perf}). 
We also showed its scalability (Sec.~\ref{sec:exp_sca}) and 
ability to perform well when trained and tested on different graphs (Sec.~\ref{sec:exp_induct}). 

Our future work will extend our approach to other influence-related graph problems.
For example, our method can be adapted to the influence maximization problem~\citep{kempe2003maximizing}, whose objective is to identify the most influential seed set, by (1) introducing a global seed node linked to all existing nodes with 100\% activation probabilities and (2) selectively removing some of these new edges to maximize the influence of the seed node.


\section*{Acknowledgements}
This work was supported by the National Research Foundation of Korea (NRF) grant funded by the Korea government (MSIT) (No. RS-2024-00406985, 50\%). This work was supported by Institute of Information \& Communications Technology Planning \& Evaluation (IITP) grant funded by the Korea government (MSIT) (No. 2022-0-00871 / RS-2022-II220871, Development of AI Autonomy and Knowledge Enhancement for AI Agent Collaboration, 40\%) (No. RS-2019-II190075, Artificial Intelligence Graduate School Program (KAIST), 10\%).

\bibliography{aaai25}

\newcommand{\smallsection}[1]{\noindent\underline{\smash{\textbf{#1:}}}}

\clearpage
\appendix
\vspace{1mm}
\begin{center}
    \Large{\bf \method: APPENDIX} \\
    \vspace{-1mm}
\end{center}

\section{Proofs}\label{sec:pfs_A}
\subsection{Proof of Theorem~\ref{thm:np_hard}}\label{app:proof:np_hard}
\begin{proof}
    We consider an NP-hard problem called minimum $k$-union~\citep{vinterbo2002note}, and
    we shall show that given each instance of minimum $k$-union, we can construct an instance of influence minimization, such that if the instance of influence minimization is our problem is solved, then the original instance of minimum $k$-union is solved.
    Given a collection of $m$ set $S_1, S_2, \ldots, S_m$ and a positive integer $k$, the minimum $k$-union problem aims to find $k$ sets ($S_{i_1}, S_{i_2}, \ldots, S_{i_k}$) such that $\vert \bigcup_{j} S_{i_j} \vert$ is minimized.
    Let 
    \[X = \{x_1, x_2, \ldots, x_n\} = \bigcup_{j \in [m]}S_j.\]
    Given any instance of minimum $k$-union, we construct the following instance of influence minimization:
    we have a single seed node $S = \{v_{init}\}$,
    $m$ nodes $v^{(S)}_{j}$ for $j \in [m]$, and
    $n$ nodes $v^{(X)}_{i}$ for $i \in [n]$.
    We have $m$ edges from $v_{init}$ to each $v^{(S)}_{j}$, 
    and edge from $v^{(S)}_{j}$ to $v^{(X)}_{i}$ if and only if $x_i \in S_j$, for each $(i, j)$ pair.
    Also, each edge $e$ has an activation probability $p(e) = 1$.
    Now, it is easy to see that the original instance of minimum $k$-union is equivalent to finding $m - k$ edges between $v_{init}$ and $v^{(S)}_{j}$ such that removing those $m - k$ edges will minimize the expected number of activated nodes in the whole process (i.e., the objective in influence minimization).
    Hence, it suffices to show that you cannot do better by removing other edges.
    Indeed, if you remove an edge from $v^{(S)}_{j}$ to $v^{(X)}_{i}$, then replacing it by removing the edge from $v_{init}$ to $v^{(S)}_{j}$ will give at least the same, or better, minimization performance.
\end{proof}

\subsection{Proof of Lemma~\ref{lem:gnn_diff}}\label{app:proof:gnn_diff}
\begin{proof}
We have
\[
\frac{\partial \GNN_{\theta}(v; G, \Tilde{p}_{\Tilde{r}}, S)}{\partial \Tilde{r}} = \frac{\partial \GNN_{\theta}(v; G, \Tilde{p}_{\Tilde{r}}, S)}{\partial \Tilde{p}_{\Tilde{r}}} \frac{\partial \Tilde{p}_{\Tilde{r}}}{\partial \Tilde{r}},
\]
    where    
    \[
    \frac{\partial \Tilde{p}_{\Tilde{r}}(v, u)}{\partial \Tilde{r}(v', u')} = p(v, u) \mathbb{1}(v = v' \land u = u').
    \]    
    As mentioned in Sec.~\ref{subsec:method:ours_greedy}, $p$ is used as edge weights in the input graph of GNN, and thus $\frac{\partial \GNN_{\theta}(v; G, \Tilde{p}_{\Tilde{r}}, S)}{\partial \Tilde{p}_{\Tilde{r}}}$ is also well-defined.
    Hence, $\frac{\partial \GNN_{\theta}(v; G, \Tilde{p}_{\Tilde{r}}, S)}{\partial \Tilde{r}}$ is well-defined.
\end{proof}

\subsection{Proof of the meaningfulness of the loss Function}\label{app:proof:relax_good}
\begin{lemma}\label{lem:relax_good}
    Given a test instance $\calT = (G = (V, E), p, S, b)$, let 
    \[
    \sigma^* \coloneqq \min_{\calE \subseteq E, \Abs{\calE} = b} \sigma(S; G_{\setminus \calE}, p_{\setminus \calE}).
    \]
    Assume that 
    (1) $\GNN_\theta$ is well trained, i.e.,
    \[
    \Abs{\GNN_\theta(v; G, p, S) - \pi(v; G, p, S)} \leq \epsilon, \forall v
    \]
    for some $\epsilon > 0$ and
    (2) $\alpha$ and $\beta$ are sufficiently large,
    then each $r^* \in \arg \min_{\Tilde{r}} \calL_O(\Tilde{r})$ satisfies that
    (1) $r^*$ is discrete, 
    i.e., 
    \[
    r^*(e) \in \Set{0, 1}, \forall e \in E,
    \]
    (2) $r^*$ satisfies the budget constraint,
    i.e.,
    \[
    \sum_{e \in E} r^*(e) = \Abs{E} - b,
    \]
    and
    (3) $r^*$ is a good solution, i.e., 
    \[
    \sigma(S; G_{\setminus \calE^*}, p_{\setminus \calE^*}) - \sigma^* \leq 2\Abs{V}\epsilon,
    \]
    where 
    \[
    \calE^* = \calE^*(r^*) = \Set{e \in E \colon r^*(e) = 0}.
    \]
\end{lemma}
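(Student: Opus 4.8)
The plan is to prove the three claims in sequence, exploiting the fact that the three components of $\calL_O$ (Eq.~\eqref{eq:loss_optimize}) act on three essentially orthogonal aspects of $\tilde r$: $\calL_{\text{certainty}}$ forces discreteness, $\calL_{\text{budget}}$ forces the cardinality constraint, and $\calL_{\text{obj}}$ controls solution quality. First I would fix the intuition: $\calL_{\text{certainty}}$ is (up to sign and normalization) a sum of binary-entropy-like terms that is strictly minimized exactly when each $\tilde r(e) \in \Set{0,1}$, and $\calL_{\text{budget}}$ is a nonnegative quadratic vanishing exactly on the hyperplane $\sum_e \tilde r(e) = \Abs{E} - b$. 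Meanwhile, $\calL_{\text{obj}}$ is bounded: since $\GNN_\theta$ outputs (approximate) probabilities in $[0,1]$, both numerator and denominator of $\calL_{\text{obj}}$ lie in a bounded range, so $\Abs{\calL_{\text{obj}}}$ is at most some absolute constant $C$ independent of $\tilde r$. This boundedness is what lets a ``sufficiently large'' $\alpha,\beta$ argument go through.

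For claims (1) and (2), I would argue by a penalty/perturbation argument. Suppose $r^*$ minimizes $\calL_O$ but violates discreteness or the budget. Consider the candidate obtained by rounding $r^*$ to a nearby discrete point $\hat r$ that satisfies $\sum_e \hat r(e) = \Abs{E} - b$ (round fractional coordinates and adjust so the count is exactly $\Abs{E} - b$; such an $\hat r$ always exists). Then $\alpha \calL_{\text{budget}}(\hat r) + \beta \calL_{\text{certainty}}(\hat r)$ is as small as possible (the minimum of each term), whereas if $r^*$ has any coordinate bounded away from $\Set{0,1}$ by $\delta$, or the budget violated, then $\alpha \calL_{\text{budget}}(r^*) + \beta \calL_{\text{certainty}}(r^*)$ exceeds that minimum by a quantity that grows with $\alpha$ or $\beta$ and is bounded below in terms of $\delta$. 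Since $\calL_{\text{obj}}$ only changes by at most $2C$ between $r^*$ and $\hat r$, choosing $\alpha, \beta$ large enough makes $\calL_O(\hat r) < \calL_O(r^*)$, a contradiction. (One subtlety: $\calL_{\text{certainty}}$ diverges to $+\infty$ at the boundary if we read $\log r(e)$ naively, which actually helps discreteness only under the right sign convention — I would state the convention so that the minimum of $\calL_{\text{certainty}}$ is attained at, not repelled from, the binary vertices; alternatively interpret $\calL_{\text{certainty}}$ via its continuous extension with $0\log 0 := 0$ so that it is minimized at the vertices.)

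For claim (3), once we know $r^*$ is discrete and budget-feasible, write $\calE^* = \Set{e : r^*(e) = 0}$, so $\Abs{\calE^*} = b$. The key observation is that for a discrete, budget-feasible $\tilde r = r^*$, both $\calL_{\text{budget}}$ and $\calL_{\text{certainty}}$ are already at their minimum, so among all such $r$, minimizing $\calL_O$ is the same as minimizing $\calL_{\text{obj}}$; moreover $\tilde p_{r^*} = p_{\setminus \calE^*}$ exactly when $r^*$ is binary, so $\GNN_\theta(v; G, \tilde p_{r^*}, S) = \GNN_\theta(v; G, p_{\setminus \calE^*}, S)$. Hence $\calL_{\text{obj}}(r^*) = -\frac{\sum_v(\GNN_\theta(v;G,p,S) - \GNN_\theta(v;G,p_{\setminus\calE^*},S))}{\sum_v \GNN_\theta(v;G,p,S) - \Abs S}$, and minimizing this over the $b$-subsets is (by the surrogate accuracy assumption) within an additive slack of minimizing the true objective. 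Concretely, let $\calE^{\mathrm{opt}}$ achieve $\sigma^*$; then since $\GNN_\theta$ is $\epsilon$-accurate on every node in every graph we evaluate (both $p$ and $p_{\setminus\calE}$ versions), each $\sum_v \GNN_\theta(v;\cdot)$ differs from the corresponding $\sigma$ by at most $\Abs V \epsilon$. Comparing the GNN-objective at $\calE^*$ (which is optimal for the GNN-objective) against its value at $\calE^{\mathrm{opt}}$, and then translating both endpoints back to the true $\sigma$ via two $\Abs V\epsilon$ bounds, yields $\sigma(S; G_{\setminus\calE^*}, p_{\setminus\calE^*}) - \sigma^* \le 2\Abs V \epsilon$ (the denominator $\sum_v \GNN_\theta(v;G,p,S) - \Abs S$ is a positive constant shared across all candidates, so it does not affect which $\calE$ is the GNN-argmin; I would note we need it to be strictly positive, which holds whenever the seed set influences at least one extra node even approximately).

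The main obstacle I anticipate is making the ``sufficiently large $\alpha, \beta$'' step fully rigorous and \emph{uniform}: I must quantify how much $\calL_{\text{certainty}}$ increases per unit of fractionality and how much $\calL_{\text{budget}}$ increases per unit of budget violation, and check that a single finite choice of $\alpha, \beta$ (depending only on $\Abs V$, $\Abs E$, the bound $C$ on $\calL_{\text{obj}}$, and $\epsilon$, not on the particular $r^*$) forces \emph{exact} discreteness rather than merely approximate. Handling the boundary behavior of $\log$ in $\calL_{\text{certainty}}$ cleanly — i.e., confirming the sign convention makes vertices minimizers and that the function is continuous (hence attains its min on the compact cube $[0,1]^E$) — is the delicate bookkeeping that the rest of the argument rests on.
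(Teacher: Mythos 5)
Your proposal follows essentially the same route as the paper's proof: the penalty terms $\calL_{\text{budget}}$ and $\calL_{\text{certainty}}$ vanish exactly on the discrete, budget-feasible decisions, so for large $\alpha,\beta$ any minimizer lies there (claims (1)--(2)), and on that set minimizing $\calL_O$ reduces to minimizing the GNN-estimated influence, after which the two-sided $\epsilon$-accuracy sandwich between $\calE^*$ and an optimal $\calE_{\min}$ yields the $2\Abs{V}\epsilon$ bound (claim (3)). The extra points you flag --- the $0\log 0$ convention and sign of $\calL_{\text{certainty}}$, strict positivity of the denominator of $\calL_{\text{obj}}$, and making the ``sufficiently large $\alpha,\beta$'' choice uniform so that feasibility is exact rather than approximate --- are all left implicit in the paper's argument as well, so your attempt is, if anything, slightly more careful while being substantively identical.
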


\begin{proof}
    Since $\alpha$ and $\beta$ are sufficiently large and there exists
    $\tilde{r}: E \to [0, 1]$ such that 
    (1) $\calL_{\text{budget}}$ is minimized, i.e.,
    \[
    \calL_{\text{budget}}(\tilde{r}) = 0,
    \]
    and
    (2) $\calL_{\text{certainty}}$ is minimized, i.e.,
    \[
    \calL_{\text{certainty}}(\tilde{r}) = 0,
    \]
    $r^*$ must satisfy that $\calL_{\text{budget}}(r^*) = \calL_{\text{certainty}}(r^*) = 0$, which is equivalent to $r^*(e) \in \Set{0, 1}, \forall e \in E$ and $\sum_{e \in E} r^*(e) = \Abs{E} - b$.
    In fact, all the discrete and budget-satisfying probabilistic decisions $\Tilde{r}$ satisfy that     
    $\calL_{\text{budget}}(\Tilde{r}) = \calL_{\text{certainty}}(\Tilde{r}) = 0$,
    and we have 
    \[
    r^* \in \arg \min_{\Tilde{r}: E \to \Set{0, 1}, \sum_{e \in E} \tilde{r}(e) = \Abs{E} - b} \calL_{\text{obj}},
    \]
    which is equivalent to 
    \[
    r^* \in \arg \min_{\Tilde{r}: E \to \Set{0, 1}, \sum_{e \in E} \tilde{r}(e) = \Abs{E} - b} \sum_{v \in V} \GNN_\theta(v; G, \Tilde{p}_{\Tilde{r}}, S).
    \]    
    Define $\calE_{r} \coloneqq \Set{e \in E \colon r(e)}$ for each $r: E \to \Set{0, 1}$, and 
    define
    \[    
    \Tilde{r}_{\calE}(e) = \mathbb{1}(e \notin \calE), \forall e \in E.
    \]
    Let
    \[
    \calE_{min} \in \arg \min_{\calE \subseteq E, \Abs{\calE} = b} \sigma(S; G_{\setminus \calE}, p_{\setminus \calE}),
    \]    
    for each $v \in V$, we have
    \begin{align*}
     \pi(v; G_{\setminus \calE_{r^*}}, p_{\setminus \calE_{r^*}}, S) 
     &\leq \GNN_\theta(v; G, \Tilde{p}_{r^*}, S) + \epsilon \\
     &\leq \GNN_\theta(v; G, \Tilde{p}_{\Tilde{r}_{\calE_{min}}}, S) + \epsilon \\
    & \leq \pi(v; G_{\setminus \calE_{min}}, p_{\setminus \calE_{min}}, S) + 2\epsilon.
    \end{align*}
    Taking the summation over $v \in V$ completes the proof.
\end{proof}

\begin{remark}\label{rem:regularization}
    {In practice, however, using too large $\alpha$ and $\beta$ would make $\calL_{\text{budget}}$ and $\calL_{\text{certainty}}$ dominant and thus impair the optimization performance w.r.t. the main objective $\calL_{\text{obj}}$.
    See Sec.~\ref{subsec:exp:ablation} for the ablation studies on $\alpha$ and $\beta$.} 
\end{remark}

\section{Extension to other spread models}\label{sec:diff_model}

We performed additional experiments under different realistic influence spread models: the linear threshold (LT) model~\citep{kempe2003maximizing} and the general Markov chain susceptible-infected-recovered (G-SIR) model~\citep{yi2022edge}.

\smallsection{Model definition}
Below, we describe the definitions of the LT model and the G-SIR model.

\begin{definition}[Linear threshold (LT) model]\label{def:LT}
    Given 
    (1) a graph $G = (V, E)$ and
    (2) a seed set $S \subseteq V$,
    $\operatorname{LT}(G, S)$ is a stochastic process as follows:
    \begin{itemize}[leftmargin=*,topsep=0pt]
        \item \textbf{Initialization:}         
        At time step $t = 0$, each seed node $v_S \in S$ is activated, and each non-seed node remains inactive. For each $v \in V$, activation threshold $p_v$, drawn from continuous uniform distribution $U(0, 1)$, is assigned to $E$.
        \item \textbf{Diffusion steps:}
        At each step $t \geq 1$,
        each inactive node $v$ until the previous step is newly activated when the ratio of its active in-neighbors exceeds the activation threshold $p_v$.
        Each activated node remains active for the whole process, and the process terminates when no node is activated in the previous step.
    \end{itemize}
\end{definition}

\begin{definition}[General Markov chain susceptible-infected-recovered (G-SIR) model]\label{def:GSIR}
    Given 
    (1) a graph $G = (V, E)$,
    (2) activation probabilities $p: E \rightarrow [0,1]$,
    (3) a recovery probability $r$,
    and
    (4) a seed set $S \subseteq V$,
    $\operatorname{G-SIR}(G, S)$ is a stochastic process as follows:
    \begin{itemize}[leftmargin=*,topsep=0pt]
        \item \textbf{Initialization:}         
        At time step $t = 0$, each seed node $v_S \in S$ is activated, and each non-seed node remains inactive.
        \item \textbf{Diffusion steps:}
        At each step $t \geq 1$,
        each active node $v$ at the previous step becomes inactive again with the recovery probability $r$ and activates each of its inactive out-neighbor $u$ with activation probability $p(v, u)$. 
        Each recovered node remains inactive for the whole process.
        As long as each activated node stays activated, other non-recovered nodes can be activated during the process. The process terminates when no node is activated in the previous step.
    \end{itemize}
\end{definition}

In the G-SIR model, each edge has its individual (and possibly different) propagation probability, while in the original SIR model, the propagation probability is the same for all edges. 
In our experiments, we set the recovery probability $r = 0.5$.

\smallsection{Extensions}
Each considered method (see Sec.~\ref{subsec:exp:setting}) was straightforwardly extended for the two additional influence spread models.
For example, for extending \method, we modified the influence-estimation component (i.e., the surrogate-model GNN; see Alg.~\ref{algo:gnn}) to use the corresponding influence spread model.

\smallsection{Experimental results}
As in Sec.~\ref{sec:exp_perf}, we measured the average reduced ratio of influence and the average running time across all the test seed sets. 
We used the same process described in Sec.~\ref{subsec:exp:setting} to generate the training and test data for the LT and G-SIR models. 
As shown in Fig.~\ref{fig:app_perf_others}, the results followed a similar trend presented in Sec. \ref{sec:exp_perf}. 
That is, \advp was significantly faster than the most effective baseline, while achieving a similar reduced ratio.
For \WL, \RIS-$\epsilon$ ran out of time for every $\epsilon \in \{0.2, 0.4, 0.6\}$, leaving no reasonably comparable baselines.

\begin{figure}[t!]
    \centering
    \begin{subfigure}{0.95\linewidth}
        \includegraphics[width=\linewidth]{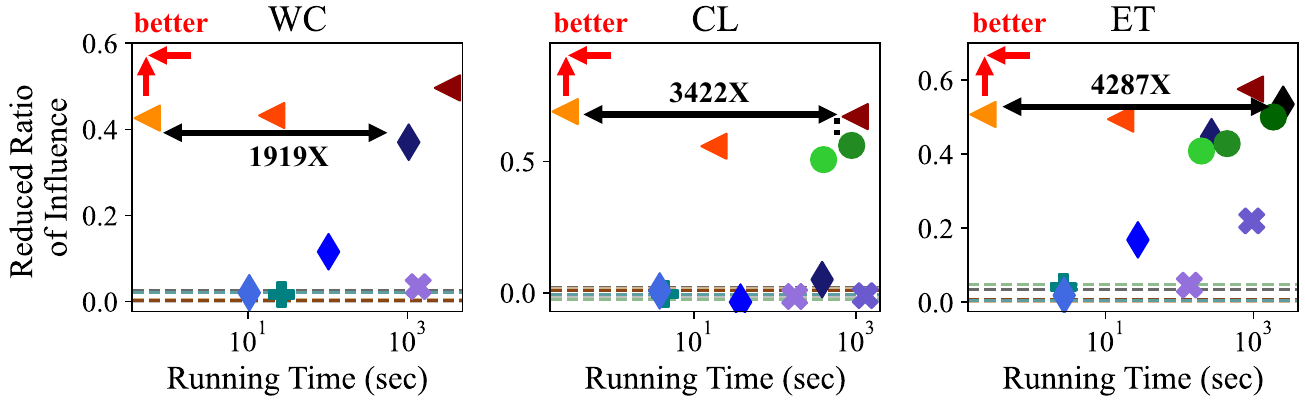}
        \caption{LT, $b=5$}
    \end{subfigure}
    \begin{subfigure}{0.95\linewidth}
            \includegraphics[width=\linewidth]{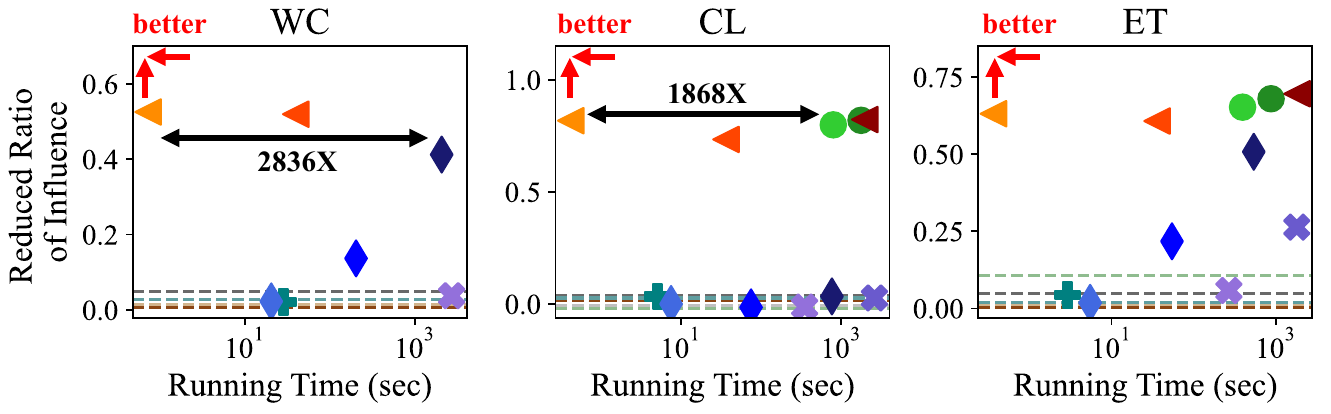}
        \caption{LT, $b=10$}
    \end{subfigure}
    \begin{subfigure}{0.95\linewidth}
        \includegraphics[width=\linewidth]{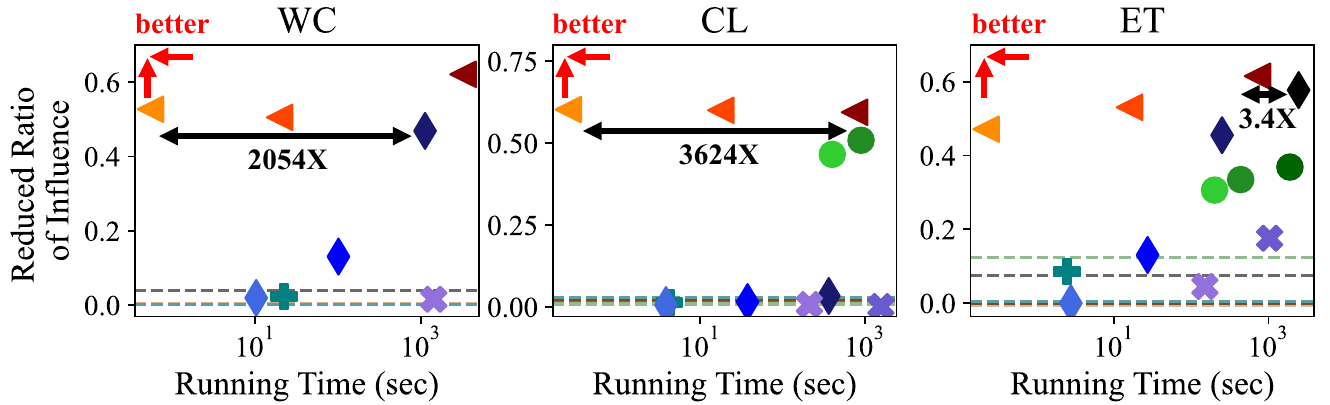}
        \caption{G-SIR, $b=5$}
    \end{subfigure}
    \begin{subfigure}{0.95\linewidth}
        \includegraphics[width=\linewidth]{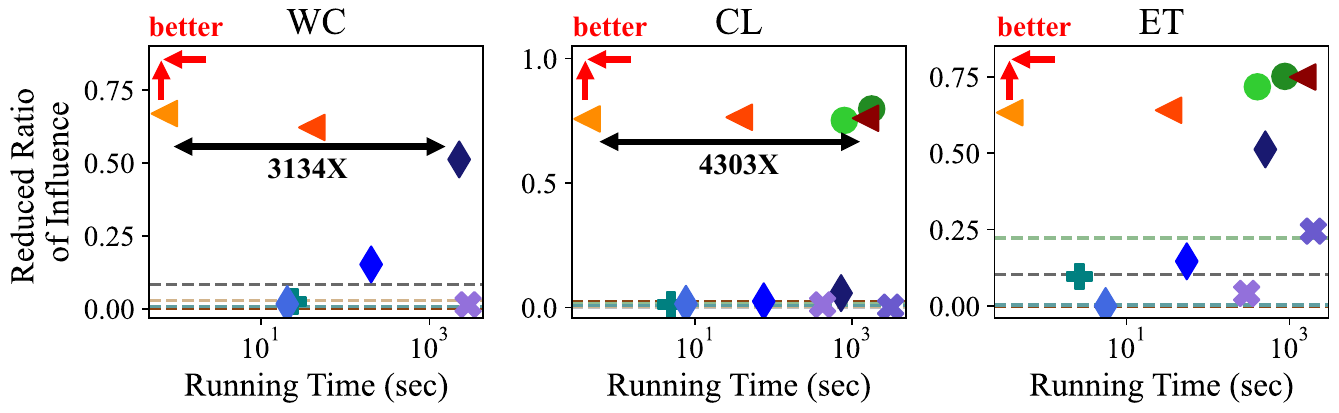}
        \caption{G-SIR, $b=10$}
    \end{subfigure}
    \begin{subfigure}{0.95\linewidth}
        \includegraphics[width=\linewidth]{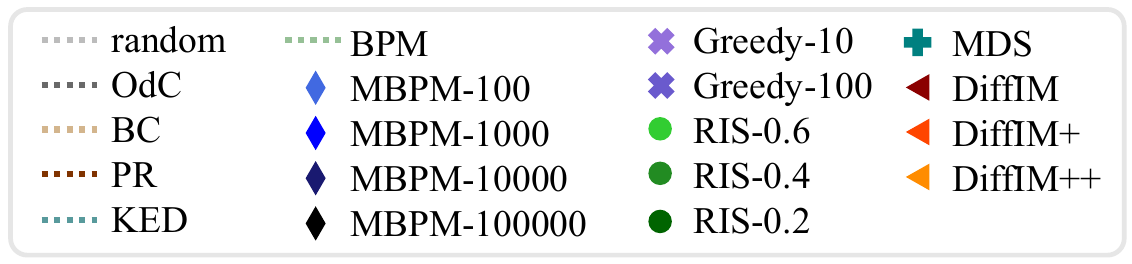}
    \end{subfigure}
    
    \caption{The effectiveness (the reduced ratio of influence) and running time of each method, with budget $b \in \Set{5, 10}$ on LT and G-SIR models. We compared the running time of the best baseline to one of our methods with the most similar reduced ratio. Except for (d) on \CL and \EL, all \method versions were Pareto-optimal, i.e., no baseline is faster \textit{and} more effective than any version.
    }
    \label{fig:app_perf_others}
\end{figure}


\begin{algorithm}[t]
    \caption{An incremental greedy algorithm}\label{algo:greedy}
    \SetKwInput{KwInput}{Input}
    \SetKwInput{KwOutput}{Output}
    \KwInput{(1) $G =  (V, E)$: an input graph \\
    \quad\quad\quad (2) $p: E \to [0, 1]$: activation probabilities \\
    \quad\quad\quad (3) $S \subseteq V$: a seed set \\ 
    \quad\quad\quad (4) $b$: an edge-removal budget \\}
    \KwOutput{$\mathcal{E} \subseteq E$: a set of edges chosen to be removed}
    $\mathcal{E} \leftarrow \emptyset$ \label{algo:greedy:init} \hfill $\triangleright$ Initialization \\     
    \For{$i = 1, 2, \ldots, b$}{
        $e=\arg\min_{e\in E\setminus \calE}\sigma(S; G_{\setminus \calE}, p_{\setminus \calE})$ \label{algo:greedy:find}\\
        $\mathcal{E}\leftarrow \mathcal{E} \cup \{e\}$\label{algo:greedy:update}\hfill $\triangleright$ Incremental update \\        
    }
 \Return $\mathcal{E}$
 \label{algo:greedy:end}
\end{algorithm}
\begin{algorithm}[t]
    \caption{{GNN training for influence estimation (executed once, applicable to all future unseen cases)}} \label{algo:gnn}
    \SetKwInput{KwInput}{Input}
    \SetKwInput{KwOutput}{Output}
    \KwInput{(1) $G=(V, E)$: an input graph \\
    \quad\quad\quad (2) $p: E \to [0,1]$: activation probabilities \\
    \quad\quad\quad (3) $\calS = \Set{S_1, S_2, \ldots, S_k}$: a set of seed sets \\
    \quad\quad\quad (4) $\GNN_{\theta}$: a GNN to be trained \\
    }
    \KwOutput{$\GNN_{\theta}$: a trained GNN}
    {Obtain} $\tilde{\pi}(v; G, S_i, p)$ by MC, for each $v \in V$ and $S_i \in \calS$ \label{algo:gnn:inf_MC}\\ 
    Update $\theta$ w.r.t. $\calL(\GNN_{\theta}; \tilde{\pi}, \calS, G, p)$ 
    \label{algo:gnn:update_param} \\
 \Return $\GNN_{\theta}(\cdot)$ \label{algo:gnn:return}  
\end{algorithm}

\section{Pseudo-codes of algorithms}\label{sec:alg}
We provide the pseudo-codes of the naive incremental greedy algorithm (Alg.~\ref{algo:greedy}; see Sec.~\ref{subsec:method:naive_greedy}) and GNN training (Alg.~\ref{algo:gnn}; see Sec.~\ref{subsec:method:ours_greedy}).

\section{Additional related work}\label{app:rel_wk_detail}
Here, we provide more details of the related work on influence minimization.

\smallsection{Node removal}
The following works considered influence minimization with node removal.
\citet{wang2013negative} employed an incremental greedy algorithm under the IC model.
\citet{chang2016guarantee} focused on the outbreak phenomenon under the IC model and addressed the problem of maximizing the probability that the influence remains below a certain threshold, employing a greedy algorithm based on Monte Carlo simulation.
\citet{zheng2018least} employed a greedy algorithm to remove bridge end nodes to prevent propagation to other communities.
\citet{zhu2021misinformation} removed node groups from given candidate node groups in an acyclic graph under a propagation model that extends the IC model by incorporating ECE (echo chamber effect).
\citet{xie2023minimizing} accelerated the influence estimation in the greedy algorithm using a dominator tree.
\citet{ni2023misinformation} considered protecting specific nodes in multi-social networks by forcing them not to be activated.

\smallsection{Edge removal}
The following works considered influence minimization with edge removal.
\citet{Kimura2009blocking} minimized the average (or maximum) influence when each node is a seed set under the IC model using a greedy algorithm and accelerated the process with the bond percolation method.
\citet{tong2012gelling} minimized the eigenvalue of the graph under the susceptible-infectious-susceptible (SIS) model.
\citet{khalil2014scalable} minimized the influence when one of the nodes in the given source set became a seed node under the LT model and employed a greedy algorithm that quickly calculates marginal loss using a live-edge tree.
\citet{yao2015minimizing} applied a greedy algorithm to the influence minimization problem identical to the one considered by us (i.e., Problem~\ref{problem:rumor_blocking}).
\citet{yan2019rumor} addressed a problem identical to ours but with the condition of acyclic graphs. They analyzed the problem from the perspective of marginal decrement and proposed a heuristic algorithm based on propagation ability.
\citet{yi2022edge} accelerated the greedy algorithm in the G-SIR model by using reverse influence sampling.
\citet{zareie2022rumour} minimized the spreading ability of the graph when the seed set is not given.
\citet{wang2020efficient} proposed a robust sampling-based greedy algorithm to protect a given target node in the LT model.
\citet{jiang2022rumordecay} protected given target nodes in the SIR model, by identifying critical edges through sampling paths from the seed nodes to the target nodes

\smallsection{Active defense}
The following works proposed active defense to counter negative propagation.
\citet{budak2011limiting} used a Monte Carlo simulation-based greedy algorithm in the IC model and compared it with some heuristics.
\citet{he2012influence} estimated the influence of each node by using a locally directed acyclic graph in the LT model.
\citet{fan2013least} selected bridge ends to protect other communities from propagation using a greedy algorithm in the opportunistic one-activate-one model and deterministic one-activate-many model.
\citet{luo2014time} employed a Monte Carlo simulation-based greedy algorithm in the continuous-time multiple campaign diffusion model.
\citet{zhu2016minimum} efficiently calculated single-hop spread to estimate influence in the IC model.
\citet{tong2019beyond} developed a hybrid sampling process in the IC model that attaches high weights to the users vulnerable to misinformation.
\citet{hosni2019darim} used a greedy algorithm that simultaneously performs node blocking and active defense.


\begin{algorithm}[htb]
    \small
    \caption{{MDS}}\label{algo:MDS}
    \SetKwInput{KwInput}{Input}
    \SetKwInput{KwOutput}{Output}    
    \KwInput{(1) $G =  (V, E)$: an input graph \\
    \quad\quad\quad (2) $p: E \to [0, 1]$: activation probabilities \\
    \quad\quad\quad (3) $S \subseteq V$: a seed set \\
    \quad\quad\quad (4) $b$: an edge-removal budget \\
    \quad\quad\quad (5) $h$: the number of  propagation hops}
    \KwOutput{$\mathcal{E}\subset E$: a set of edges chosen to be removed}
    $\textt{prob}(v) \leftarrow \pi(v; G,p,S)$ \hspace*{0pt}\hfill $\triangleright$\ {MC simulation} \\ 
    $A \leftarrow \text{the adjacent matrix of $G$}$\\
    $\textt{rsa} \leftarrow \sum_{0\le i\le h} A^i\mathbf{1}$\\
    \For{$i = 1,2,\ldots,b$}{
        $s_e\leftarrow (1-\textt{prob}(u))\cdot(\textt{rsa}(v)+\textt{rsa}(u)), \forall (v, u) \in E$\\
        $e \leftarrow \arg\max_{e\in E} s_e$\\
        $\textt{rsa} \leftarrow \textt{update\_rsa$(G,p,e, \textt{rsa})$}$\\
        $\textt{prob} \leftarrow \textt{update\_prob$(G,p,e, \textt{prob})$}$\\
        $E\leftarrow E \setminus \{e\}$;      $\mathcal{E}\leftarrow \mathcal{E}\cup\{e\}$\\
    }
    \Return $\mathcal{E}$
    
    \SetKwProg{Fn}{Function}{:}{\KwRet}
    \Fn{$\textt{update\_prob($G, p, e = (s,t), \texttt{prob}$)}$}{
        $\Delta\texttt{prob}(v)\leftarrow 0, \forall v\in V$\\
        $\Delta\texttt{prob}(t)\leftarrow \frac{p(s,t)\cdot \texttt{prob}(s)\cdot(1- \texttt{prob}(t))}{1-p(s,t)\cdot\texttt{prob}(s)}$\\
        $U \leftarrow \{t\}$; $V \leftarrow \{t\}$\\
        \While{$U\ne\emptyset$}{
            $U' \leftarrow \emptyset$\\
            \ForEach{$(u, v)\in E$ such that $u\in U, v \in V$}{
                $\Delta\texttt{prob}(v)\leftarrow \Delta\texttt{prob}(v)+\frac{p(u,v)\cdot\Delta\texttt{prob}(u)\cdot(1-\texttt{prob}(v))}{1-p(u,v)\cdot\texttt{prob}(u)}$\\
                \If{$v\notin V$}{
                    $U'\leftarrow U'\cup\{v\}$; $V\leftarrow V\cup\{v\}$
                }
            }
            $U \leftarrow U'$
        }
        $\texttt{prob}(v)\leftarrow\texttt{prob}(v)-\Delta\texttt{prob}(v), \forall v\in V$\\
        \Return $\texttt{prob}$
    }
    
    \SetKwProg{Fn}{Function}{:}{\KwRet}
    \Fn{$\textt{update\_rsa($G, p, e = (s, t), \texttt{rsa}$)}$}{
        $\Delta\texttt{rsa}(t)\leftarrow p(s,t)\cdot\texttt{rsa}(t)$\\
        $U \leftarrow \{s\}$; $V \leftarrow \{s\}$\\
        \While{$U\ne\emptyset$}{
            $U' \leftarrow \emptyset$\\
            \ForEach{$(v, u)\in E$ such that $v\in V, u\in U$}{
                $\Delta\texttt{rsa}(v)\leftarrow \Delta\texttt{rsa}(v)+p(v,u)\cdot\Delta\texttt{rsa}(u)$\\
                \If{$v\notin V$}{
                    $U'\leftarrow U'\cup\{v\}$; $V\leftarrow V\cup\{v\}$
                }
            }
            $U \leftarrow U'$
        }
        $\texttt{rsa}(v)\leftarrow\texttt{rsa}(v)-\Delta\texttt{rsa}(v), \forall v\in V$\\
        \Return $\texttt{rsa}$
    }
\end{algorithm}
\begin{algorithm}[htb]
    \caption{{Modified BPM}}
    \label{algo:MBPM}
    \small
    \SetKwInput{KwInput}{Input}
    \SetKwInput{KwOutput}{Output}
    \KwInput{(1) $G =  (V, E)$: an input graph \\
    \quad\quad\quad (2) $p: E \to [0, 1]$: activation probabilities \\
    \quad\quad\quad (3) $S \subseteq V$: a seed set \\
    \quad\quad\quad (4) $b$: an edge-removal budget \\
    \quad\quad\quad (5) $d$: the number of samplings}
    \KwOutput{$\mathcal{E}\subset E$: a set of edges chosen to be removed}
    $\mathcal{E} \leftarrow \emptyset$\\     
    \For{$i = 1,2,\ldots,b$}{
        $s_e \leftarrow 0, n_e \leftarrow 0, \forall e\in E$\\        
        \For{$j = 1,2,\ldots,d$}{
            $l_e\sim \text{Bernoulli}(p(e)), \forall e\in E$\\ 
            $E'\leftarrow \{e\in E \colon l_e=1\}$\\
            $G' \leftarrow (V,E')$\\
            $n_S \leftarrow$ the number of nodes reachable from $S$ on $G'$ \label{algo:MBPM:reach}\\
            \ForEach{$e\in E \text{ such that } l_e=0$}{
                $s_e \leftarrow s_e+n_S$;            $n_e \leftarrow n_e+1$\\
            }
        }
        \ForEach{$e\in E$}{
            \If{$n_e\ge 1$}{$s_e\leftarrow s_e/n_e$}
            \Else {$s_e \leftarrow \infty$}
        }
        $e \leftarrow \arg\min_e s_e$\\
        $E \leftarrow E\setminus \{e\}$;        $\mathcal{E}\leftarrow\mathcal{E}\cup\{e\}$\\
    }
 \Return $\mathcal E$ 
\end{algorithm}

\section{Additional details of baseline methods}\label{sec:app_base}
\begin{enumerate}[leftmargin=*,topsep=0pt,itemsep=0pt]
    \item \textbf{\MDS}~\cite{yan2019rumor} estimates the incremental change in the influence when each edge is removed, and greedily chooses $b$ edges w.r.t. the incremental changes. In Alg.~\ref{algo:MDS}, we provide the pseudo-code of MDS. 
    The edges are chosen according to the influenced probability and rumor-spread ability of their endpoints, with incremental updates. 
    \item \textbf{BPM}~\cite{Kimura2009blocking} uses the bond percolation method (BPM) to estimate the importance of edges, and removes the top-$b$ edges w.r.t. importance scores. BPM does not consider specific seed nodes.
    \item \textbf{Modified \BPM (\MBPM)} is a modified version of \BPM that considers specific seed nodes.
    Specifically, we count the number of nodes reachable from the seed set $S$, in a sampled graph according to the activation probabilities $p$. We provide its pseudo-code in Alg.~\ref{algo:MBPM}.
    \item \textbf{\RIS}~\cite{yi2022edge} randomly samples a node and an activated graph each round, checks for each edge whether the node is reachable from the set of seeds without traversing the edge, and removes bottom-$b$ edges w.r.t. the success rate. Let $\tilde{\sigma}(S; G, p)$ be the number of infected nodes (including recovered nodes for the G-SIR model~\cite{yi2022edge}) except for the seed set. 
    According to Proposition 8.1 in~\cite{yi2022edge}, a $(1\pm \epsilon)$-approximation of $\tilde{\sigma}(S; G, p)$ is obtained with $O(\epsilon^{-2}n\log n)$ rounds of sampling an activated graph and a node. Therefore, as the error term $\epsilon$ decreases, a larger number of rounds is required.
    In our experiments, we conduct $0.1\cdot\epsilon^{-2}n\log n$ rounds for $\epsilon=0.2$, $0.4$, and $0.6$, respectively.
\end{enumerate}

\section{Additional details of experimental settings}\label{sec:app_setting}

\smallsection{\method}
We implemented \method in Python with the PyG library~\citep{fey2019fast} using the Adam optimizer~\cite{kingma2014adam}.
The initial node features were one-dimensional binary: $1$ if the node is a seed node, and $0$ otherwise.
For GNN training (Alg.~\ref{algo:gnn}), the learning rate and its decaying rate were optimized by Optuna~\cite{optuna_2019} with 2000 epochs.

\smallsection{Hardware}
All the versions of \method were run on a machine with 
2.10GHz Intel\textsuperscript{\textregistered} Xeon\textsuperscript{\textregistered} Silver 4210R processors and RTX2080Ti GPUs.
A single GPU was used for each experiment.
The baseline methods did not use GPUs, and they were run on a machine with more powerful CPUs, 3.70GHz Intel\textsuperscript{\textregistered} Core\textsuperscript{\texttrademark} i9-10900KF processors.

\section{Additional experimental results}\label{sec:app_exp}

\subsection{Additional results for Q1. Performances}\label{sec:app_perf}
We reported the effectiveness (the reduced ratio of influence) and the running time, along with the standard deviations, for each method, for each budget $b \in \Set{3,5,7,10}$, in Fig.~\ref{fig:app_perf} and Table~\ref{tab:app_perf}.
The standard deviations were computed over different seed sets, which could be high since the seed sets could be highly different. 


\subsection{Budget scalability}\label{app:scal} 
We conducted experiments with all the considered methods with budget $b$ increasing from $1$ to $10$.
In Table~\ref{tab:budget_min}, for each method and each dataset, we reported the minimum value of $b$ with which the method ran out of time (i.e., took more than one hour on a single seed set), where ``.''  indicates that the method did not run out of time even with $b = 10$.
The methods whose outputs do not depend on the seed set were not included.
Among the versions of \method, only \naive failed on the largest dataset \WL.
Among the baseline methods, MBPM had the lowest budget scalability, and \Greedy did not scale well on large graphs.

\subsection{Variants of \method}\label{app:together1by1} 
For \adv, one can optimize $\Tilde{r}$ and then choose the bottom-$b$ edges with the lowest $\Tilde{r}$ values together instead of removing edges one by one for each $n_{ep}$ epochs.
In Fig.~\ref{fig:together1by1}, we reported the effectiveness of the variant (selecting edges together after $bn_{ep}$ epochs) compared to original \adv (selecting edges one by one) with different budgets $b\in [10]$ for each dataset.
The effectiveness of the variant of \adv was nearly the same as the original \adv.

For \advp, one can choose the top-$b$ edges with the largest gradient together.
In Fig.~\ref{fig:together1by1_I}, we reported the effectiveness of the variant (selecting edges together instantly) compared to the original \advp (selecting edges one by one).
The effectiveness of the variant of \advp was noticeably lower than that of original \advp, except on the CL dataset.

\subsection{Estimation errors along training}\label{app:estim_error}
As shown in Fig.~\ref{fig:val_error_6layer}, the estimation errors on the validation set decreased as the training proceeded.
Overall, the trained GCNs achieved good estimation quality, with errors at most 4.5\% of the ground-truth influence.

\subsection{Performance on large-scale datasets}\label{sec:larger_dataset}

We additionally conducted experiments on three large-scale datasets: cit-HepTh~\citep{leskovec2005graphs, gehrke2003overview}, email-EuAll~\citep{leskovec2007graph}, and twitter~\citep{leskovec2012learning}. All these datasets are available at SNAP~\citep{snapnets}. We provided their basic statistics in Table~\ref{tab:larger_data}.
Due to the absence of realistic activation probabilities, for these datasets, we used the weighted cascade model~\citep{kempe2003maximizing}, a special case of the IC model where the activation probability of each edge from node $u$ to $v$ is $1$ divided by the in-degree of $v$.

The results with a budget of $b=5$ and $b=10$ are presented in Figure~\ref{fig:app_perf_larger}. The proposed method consistently and significantly outperformed the baseline methods that completed within the time and memory limits. Here, \BPM, \naive(naive), \RIS, and \MBPM-100000 ran out of time.


\begin{table*}[h]
  \centering
  \scalebox{0.9}{%
    \begin{tabular}{l||c|c|c||c|c|c}
        \hline
        \multirow{2}{*}{method} & \multicolumn{3}{c||}{$b=3$} &\multicolumn{3}{c}{$b=5$} \bigstrut \\
        \cline{2-7}        
         & \WL & \CL & \EL  & \WL & \CL & \EL  \bigstrut\\
        \hline
        \hline
        \textsc{Random} & 0.0016 (0.0214) & -0.0133 (0.1105) & 0.0010 (0.0182)  &
        -0.0045 (0.0216) & 0.0010 (0.1188) & 0.0081 (0.0158) \bigstrut[t]\\
        OdC & 0.0022 (0.0225) & 0.0063 (0.1077) & 0.0123 (0.0207)  &
        0.0117 (0.0238) & 0.0352 (0.1534) & 0.0119 (0.0407)  \\
        BC & -0.0006 (0.0226) & 0.0006 (0.0860) & -0.0011 (0.0208)  &
        0.0019 (0.0220) & -0.0057 (0.0992) & -0.0028 (0.0206) \\
        PR &-0.0017 (0.0175) & -0.0036 (0.1204) & -0.0002 (0.0187)  &
        -0.0017 (0.0201) & -0.0010 (0.1077) & -0.0124 (0.0182) \bigstrut[b] \\
        \hline
        \BPM & \textbf{O.O.T} & 0.0159 (0.1221) & 0.0387 (0.0471)  &
        \textbf{O.O.T} & -0.0021 (0.1074) & 0.0477 (0.0527) \bigstrut[t] \\
        \KED &-0.0006 (0.0174) & 0.0109 (0.1162) & -0.0015 (0.0180)  &
        0.0029 (0.0178) & 0.0097 (0.1317) & -0.0029 (0.0556) \\
        \MDS &0.0098 (0.0254) & -0.0004 (0.1085) & 0.0298 (0.0366)  &
        0.0173 (0.0229) & 0.0202 (0.1341) & 0.0368 (0.0421) \bigstrut[b] \\
        \hline
        \MBPM-100 & 0.0105 (0.0622) & 0.0108 (0.1358) & 0.0093 (0.0610)  &
        0.0098 (0.0630) & -0.0076 (0.1215) & 0.0116 (0.0594) \bigstrut[t] \\
        \MBPM-1000 & 0.1342 (0.2195) & -0.0013 (0.1308) & 0.1396 (0.2096)  &
        0.1589 (0.2245) & -0.0052 (0.1187) & 0.1767 (0.2150) \\
        \MBPM-10000 & 0.2811 (0.2140) & 0.0136 (0.1350) & 0.3836 (0.2014)  &
        0.3160 (0.2170) & 0.0183 (0.1335) & 0.4273 (0.2127) \\
        \MBPM-100000 & \textbf{O.O.T} & 0.2046 (0.2338) & 0.4468 (0.1766)  &
        \textbf{O.O.T} & \textbf{O.O.T} & 0.5284 (0.1790) \bigstrut[b] \\
        \hline
        \Greedy-10 & 0.0059 (0.0481) & -0.0182 (0.0976) & 0.0244 (0.0995)  &
        0.0062 (0.0459) & -0.0052 (0.1040) & 0.0236 (0.1011) \bigstrut[t] \\
        \Greedy-100 & \textbf{O.O.T} & -0.0052 (0.1144) & 0.1560 (0.2363)  &
        \textbf{O.O.T} & 0.0052 (0.1374) & 0.1635 (0.2356) \bigstrut[b] \\
        \hline
        \RIS-0.6 & 0.2280 (0.1720) & 0.3813 (0.1941) & 0.2937 (0.1968) & \textbf{O.O.T} & 0.5045 (0.2064) &   0.3641 (0.2102) \bigstrut[t] \\
        \RIS-0.4 & \textbf{O.O.T} &  0.4428 (0.1960) & 0.3202 (0.2078) & \textbf{O.O.T} & 0.5227 (0.2136) & 0.4156 (0.2163)\\
        \RIS-0.2 & \textbf{O.O.T} & 0.4456 (0.1893) & 0.3688 (0.2100) & \textbf{O.O.T} &  0.5591 (0.2217) & 0.4513 (0.2212) \bigstrut[b]  \\
        \hline
        \naive & 0.3609 (0.1796) & 0.5240 (0.1876) & 0.4675 (0.1855)  &
        0.4311 (0.1726) & 0.6547 (0.1877) & 0.5613 (0.1828) \bigstrut[t] \\
        \adv  & 0.2713 (0.1978) & 0.5172 (0.1930) & 0.4415 (0.1871)  &
        0.3914 (0.1796) & 0.6614 (0.1957) & 0.5332 (0.1845)  \\
        \advp  & 0.3236 (0.1856) & 0.5125 (0.1956) & 0.3512 (0.2002)  &
        0.3876 (0.1759) & 0.6583 (0.1948) & 0.4718 (0.2066)  \\
        \hline        
        \multicolumn{7}{c}{}\\
        \hline        
        \multirow{2}{*}{method} & \multicolumn{3}{c||}{$b=7$} &\multicolumn{3}{c}{$b=10$} \bigstrut\\
        \cline{2-7}
         & \WL & \CL & \EL  & \WL & \CL & \EL  \bigstrut \\
         \hline
         \hline
        \textsc{Random}  &  -0.0011 (0.0187) & -0.0187 (0.1244) & 0.0014 (0.0210)  & 
        0.0002 (0.0201) & -0.0072 (0.1150) & 0.0005 (0.0182) \bigstrut[t] \\
        OdC  &  0.0275 (0.0330) & 0.0106 (0.1120) & 0.0315 (0.0439)  & 
        0.0327 (0.0338) & 0.0263 (0.1294) & 0.0448 (0.0494) \\
        BC  &  0.0027 (0.0194) & 0.0196 (0.1118) & 0.0090 (0.0557)  & 
        0.0020 (0.0234) & 0.0054 (0.1013) & 0.0158 (0.0534) \\
        PR  &  -0.0035 (0.0225) & 0.0187 (0.1071) & -0.0020 (0.0203) & 
        0.0006 (0.0239) & 0.0257 (0.1377) & 0.0002 (0.0218) \bigstrut[b] \\
        \hline
        \BPM  &  \textbf{O.O.T} & 0.0016 (0.0980) & 0.0882 (0.1145)  & 
        \textbf{O.O.T} & -0.0085 (0.1030) & 0.1047 (0.1432) \bigstrut[t] \\
        \KED  &  -0.0010 (0.0158) & 0.0017 (0.1316) & 0.0079 (0.0569)  & 
         -0.0012 (0.0197) & -0.0100 (0.1319) & 0.0113 (0.0579)\\
        \MDS  &  0.0160 (0.0244) & 0.0015 (0.1112) & 0.0372 (0.0442)  & 
        0.0163 (0.0258) & 0.0230 (0.1354) & 0.0396 (0.0424) \bigstrut[b] \\
        \hline
        \MBPM-100  &  0.0461 (0.1659) & 0.0091 (0.1202) & 0.0258 (0.1008)  & 
        0.0507 (0.1711) & -0.0080 (0.0778) & 0.0252 (0.1015)\bigstrut[t] \\
        \MBPM-1000  &  0.1587 (0.2318) & 0.0009 (0.1315) & 0.1778 (0.2219)  & 
        0.1641 (2315) & 0.0128 (0.1172) & 0.2014 (0.2220) \\
        \MBPM-10000  & 0.3365 (0.2215) & 0.0153 (0.1437) & 0.4673 (0.2089)  & 
        0.3545 (2243) & 0.0289 (0.1307) & 0.4991 (0.2072) \\
        \MBPM-100000  &  \textbf{O.O.T} & \textbf{O.O.T} & \textbf{O.O.T}  & 
        \textbf{O.O.T} & \textbf{O.O.T} & \textbf{O.O.T} \bigstrut[b] \\
        \hline
        \Greedy-10  &  0.0076 (0.0493) & -0.0152 (0.1241) & 0.0224 (0.1024)  & 
        0.0136 (0.0654) & -0.0300 (0.1453) & 0.0321 (0.1111) \bigstrut[t] \\
        \Greedy-100  &  \textbf{O.O.T} & 0.0030 (0.1299) & 0.1809 (0.2340)  & 
        \textbf{O.O.T} & -0.0074 (0.1414) & 0.1977 (0.2358) \bigstrut[b]\\
        \hline
        \RIS-0.6 & \textbf{O.O.T} & 0.5732 (0.2292) & 0.4172 (0.2134) & \textbf{O.O.T} & 0.7590 (0.2020) & 0.6390 (0.1443) \bigstrut[t]\\
        \RIS-0.4 & \textbf{O.O.T} & 0.6093 (0.2117) & 0.4688 (0.2131) & \textbf{O.O.T} & 0.8113 (0.1836) & 0.6517 (0.1534) \\
        \RIS-0.2 & \textbf{O.O.T} & \textbf{O.O.T} & 0.5116 (0.2195) & \textbf{O.O.T} & \textbf{O.O.T} & \textbf{O.O.T} \bigstrut[b] 
        \\
        \hline
        \naive  &  \textbf{O.O.T} & 0.7455 (0.1871) & 0.6171 (0.1757) & 
        \textbf{O.O.T} & 0.8124 (0.1870) & 0.6780 (0.1625) \bigstrut[t] \\
        \adv &  0.4301 (0.1754) & 0.7423 (0.2034) & 0.5881 (0.1794)  &
        0.4758 (0.1663) & 0.8352 (0.1957) & 0.6439 (0.1725)  \\
        \advp  &  0.4289 (0.1714) & 0.7417 (0.1914) & 0.5286 (0.2078)  & 
        0.4772 (0.1655) & 0.8346 (0.1919) & 0.6023 (0.1980) \\ 
    \hline
    \end{tabular}
    }
    \caption{The effectiveness (the reduced ratio of influence) of each method with the standard deviations, with budget $b \in \Set{3, 5, 7, 10}$.
  O.O.T denotes out-of-time, i.e., the method does not terminate within one hour on a single seed set in the corresponding setting.}
    \label{tab:app_perf}
\end{table*}


\begin{figure*}[t!]
    \centering
    \begin{subfigure}{0.65\linewidth}
        \includegraphics[width=\linewidth]{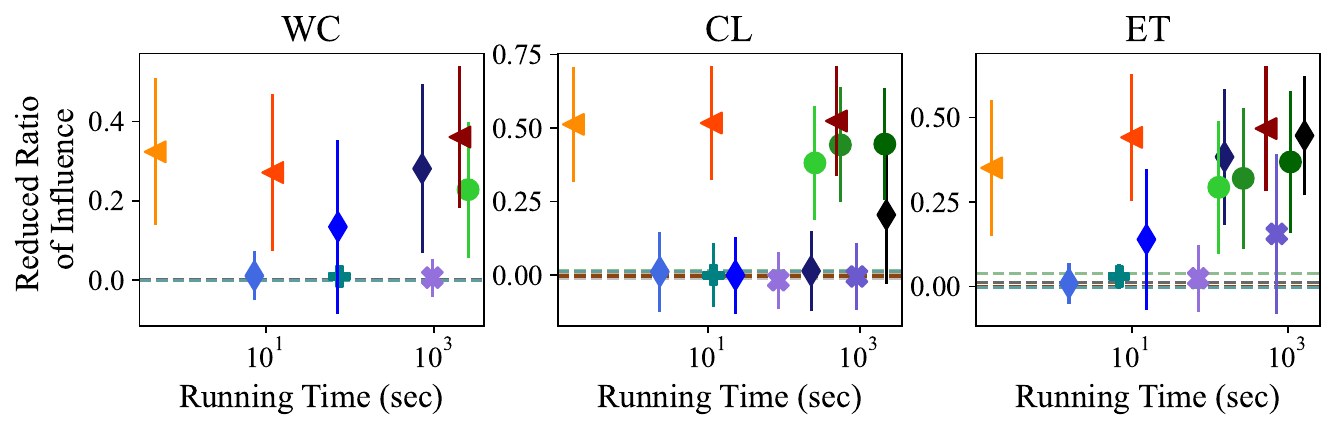}
        \caption{$b=3$}
    \end{subfigure} \\

    \begin{subfigure}{0.65\linewidth}
        \includegraphics[width=\linewidth]{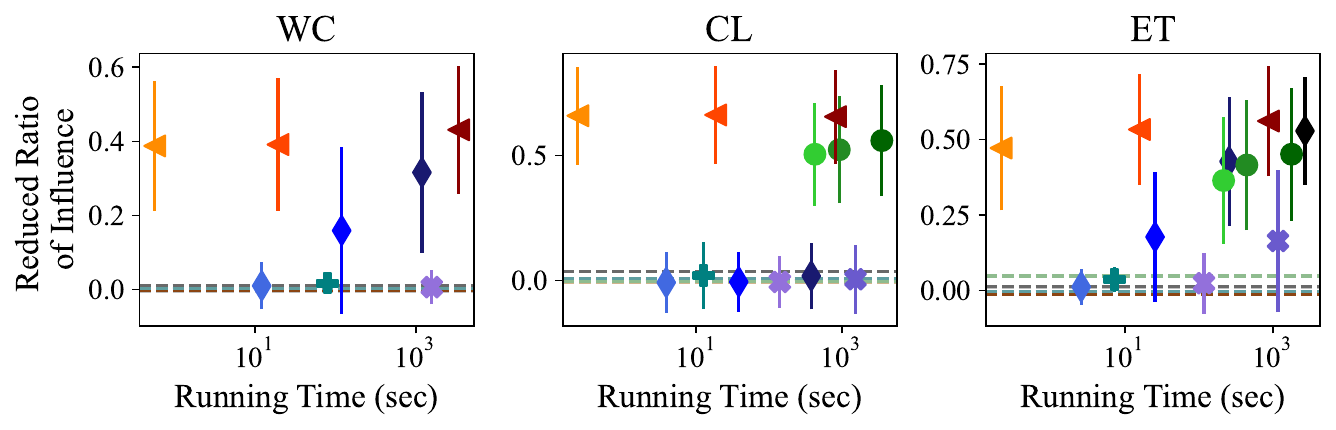}
        \caption{$b=5$}
    \end{subfigure} \\
    
    \begin{subfigure}{0.65\linewidth}
        \includegraphics[width=\linewidth]{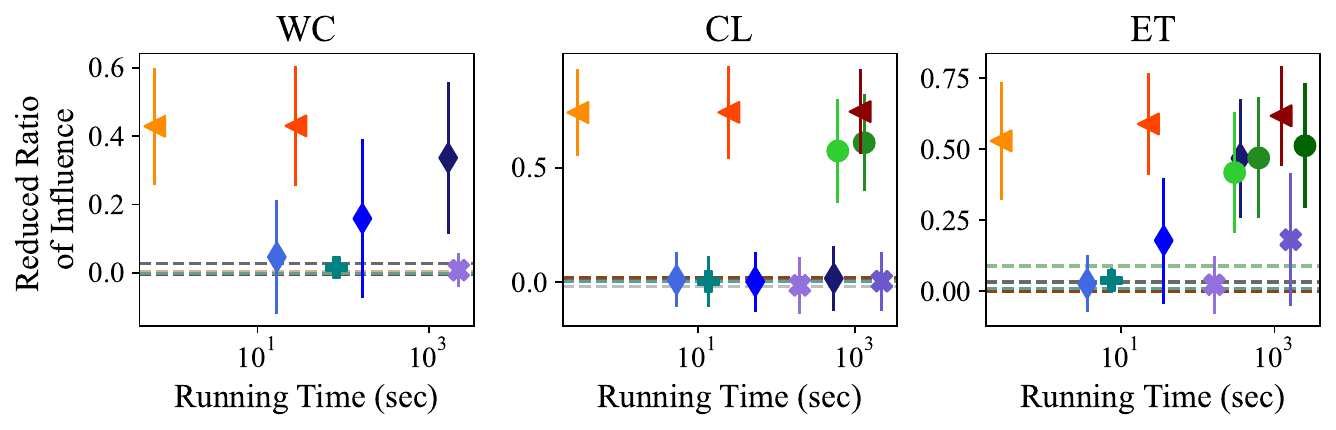}
        \caption{$b=7$}
    \end{subfigure} \\
    
    \begin{subfigure}{0.65\linewidth}
        \includegraphics[width=\linewidth]{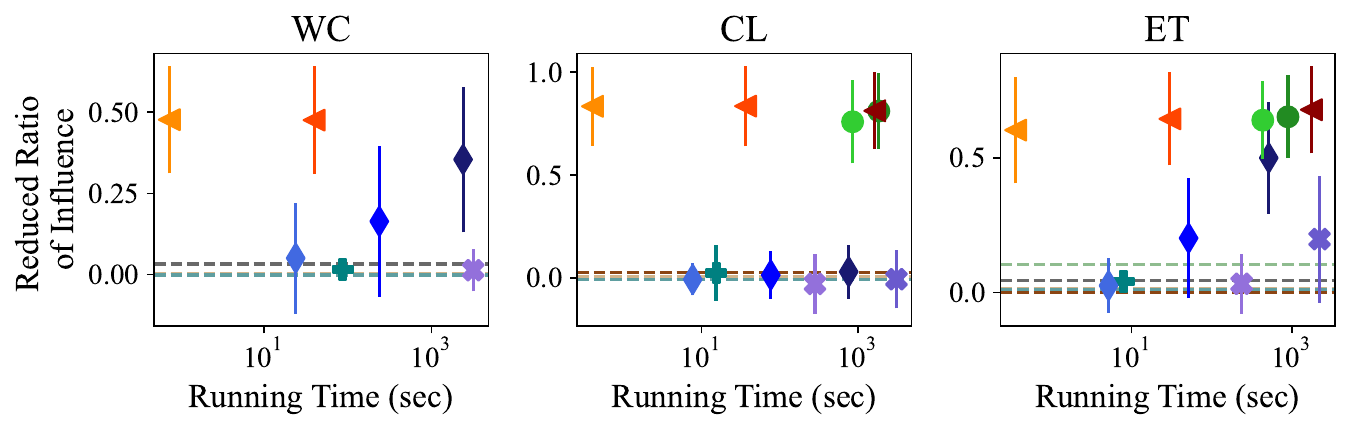}
        \caption{$b=10$}
    \end{subfigure}

    \begin{subfigure}{0.8\linewidth}
        \includegraphics[width=\linewidth]{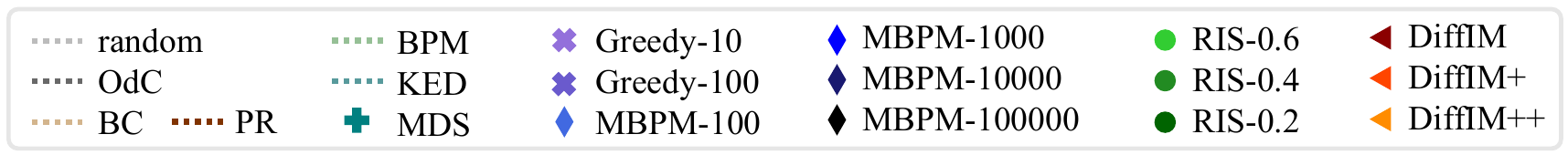}
    \end{subfigure}
    
    \caption{The effectiveness (the reduced ratio of influence) and running time of each method, with budget $b \in \Set{3, 5, 7, 10}$.
    The error bars represent one standard deviation.
    The superiority of \method is valid with all the values of $b$.}
    \label{fig:app_perf}
\end{figure*}

\FloatBarrier


\begin{table}[h!]
  \centering
  \scalebox{0.8}{%
    \begin{tabular}{l||c|c|c}
        \hline
        methods & \WL & \CL & \EL  \bigstrut \\
        \hline
        \hline        
        \MDS & .  &. &. \bigstrut \\
        \hline
        \MBPM-10000 & .  &. &.  \bigstrut[t] \\
        \MBPM-100000 & 3 & 5 & 7 \bigstrut[b] \\
        \hline
        \Greedy-10 & . & . & . \bigstrut[t] \\
        \Greedy-100 & 2  &. &. \bigstrut[b] \\
        \hline
        \RIS-0.6 & 5 & . & . \bigstrut[t]\\
        \RIS-0.4 & 2 & . & . \\
        \RIS-0.2 & 1 & 6 & 10 \bigstrut[b] \\
        \hline
        \naive & 6  &. &. \bigstrut[t] \\
        \adv & .  &. &. \\
        \advp & .  &. &. \bigstrut[b]\\
    \hline
    \end{tabular}
    }
    \caption{The minimum budget $b \leq 10$ with which each method runs out of time. Each cell with ``.'' implies that the method did not run out of time even with $b = 10$.}
    \label{tab:budget_min}
\end{table}

\begin{figure}[h!]
    \centering
    \includegraphics[width=\linewidth]{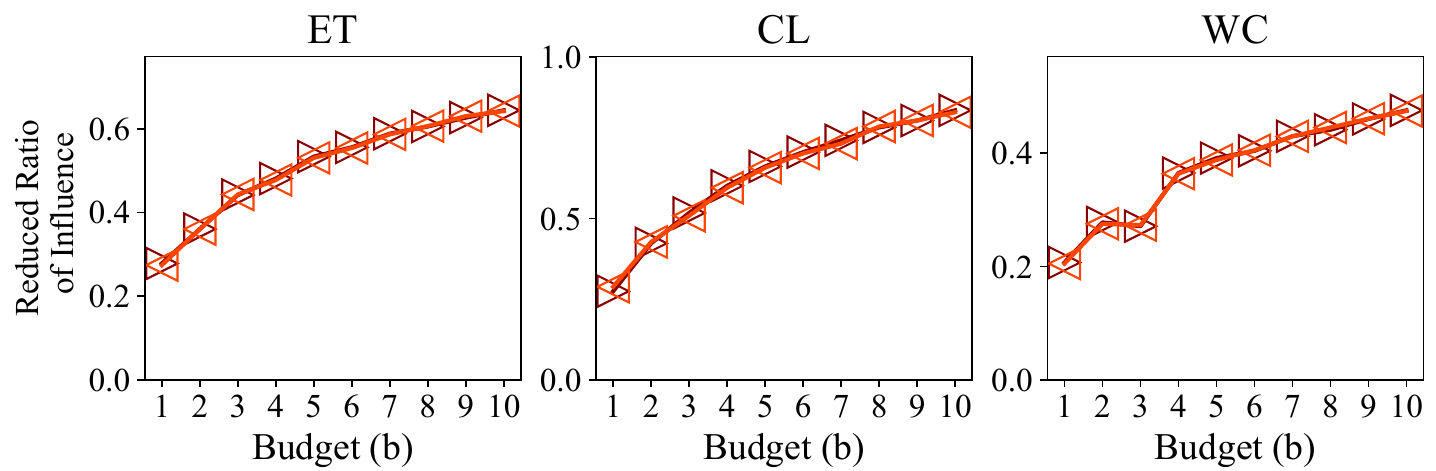}
    \includegraphics[width=0.7\linewidth]{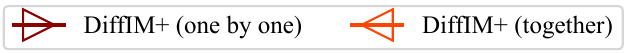}
    \caption{The effectiveness of the variant (selecting edges together after $bn_{ep}$ epochs) of \adv compared to original \adv (selecting edges one by one) with respect to budgets $b \in [10]$ for each dataset. The effectiveness of the variant was nearly the same as the original \adv.}
    \label{fig:together1by1}
\end{figure}

\begin{figure}[h!]
    \centering
    \includegraphics[width=\linewidth]{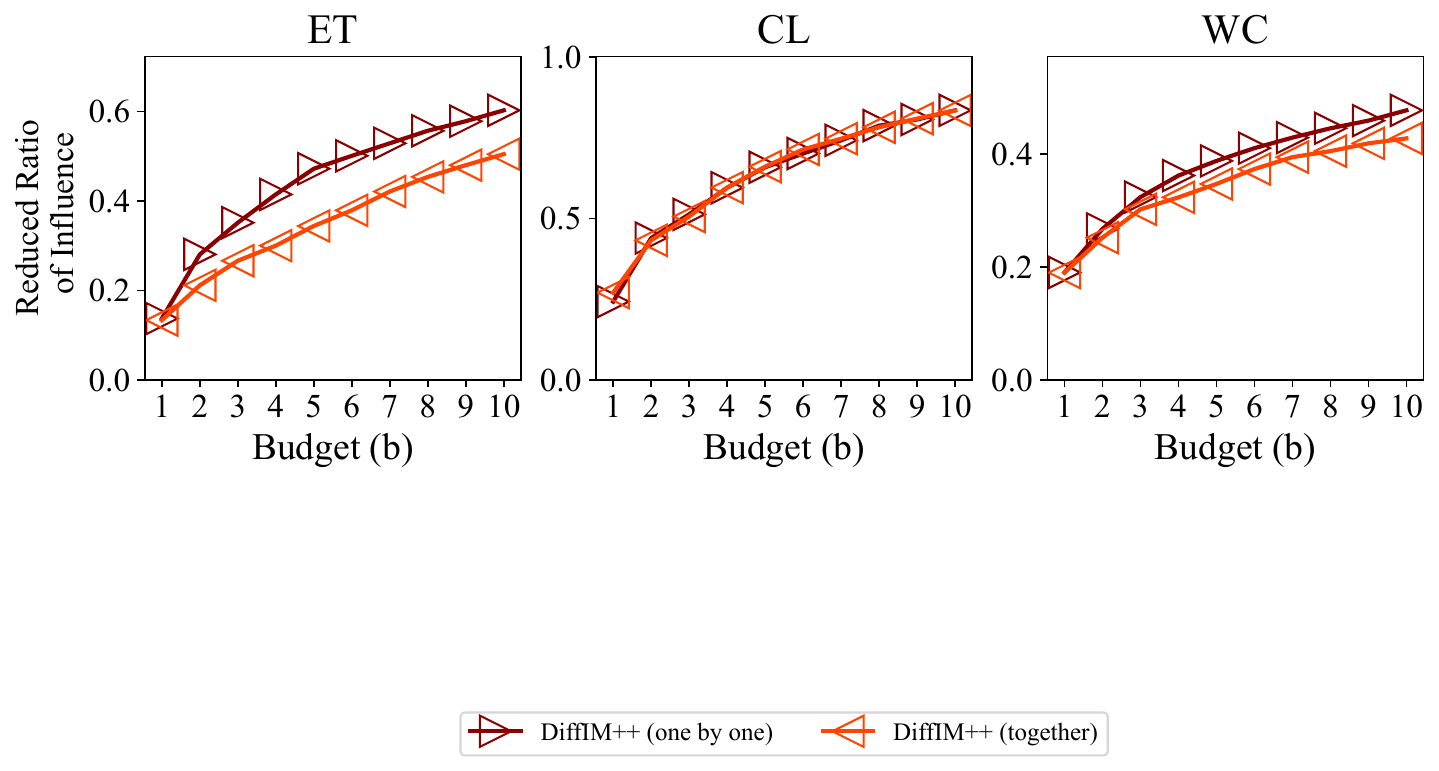}
    \includegraphics[width=0.7\linewidth]{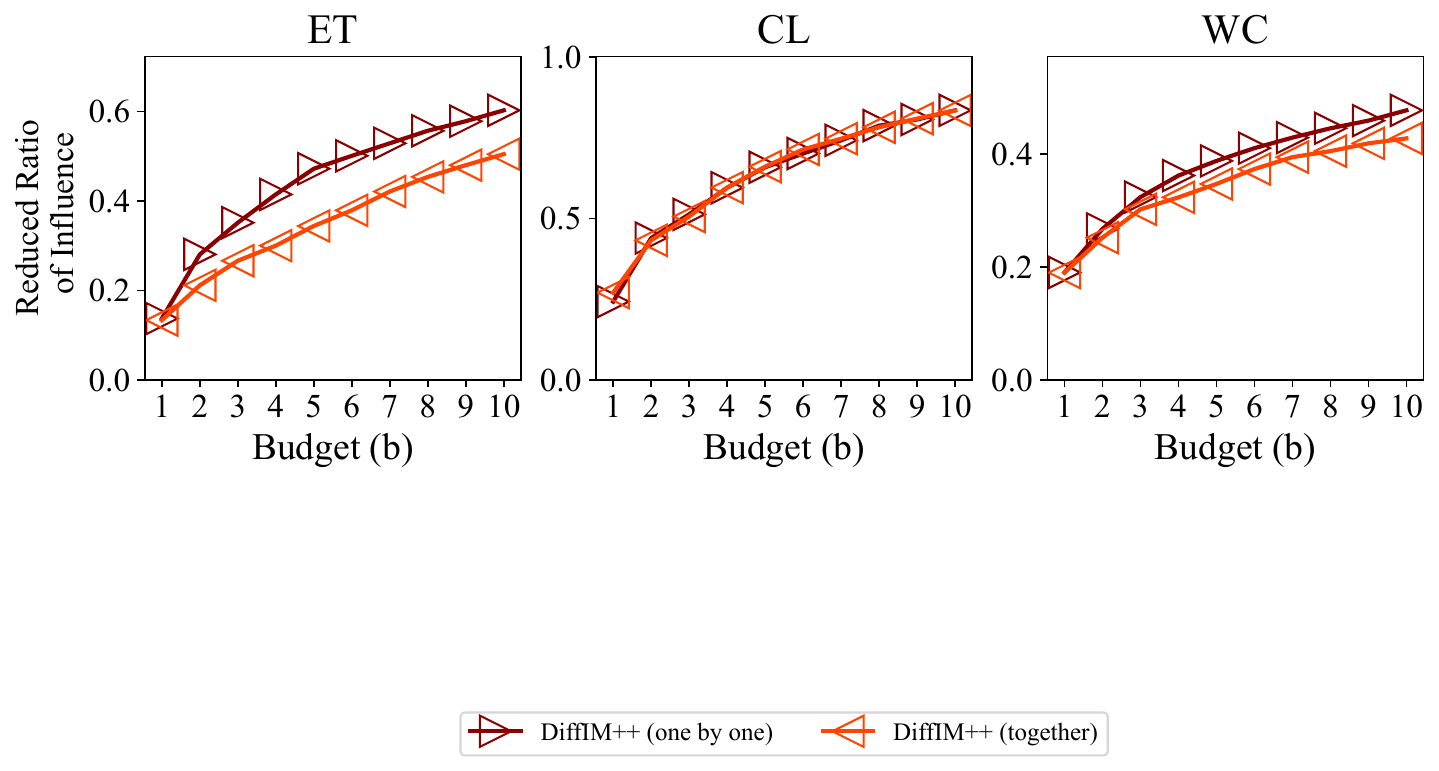}
    \caption{The effectiveness of the variant (selecting edges together) of \advp compared to original \advp (selecting edges one by one) with respect to budgets $b \in [10]$ for each dataset. The effectiveness of the variant is lower than that of the original \adv, except on the CL dataset.}
    \label{fig:together1by1_I}
\end{figure}

\begin{figure}[h!]
   \centering
   \includegraphics[width=0.75\linewidth]{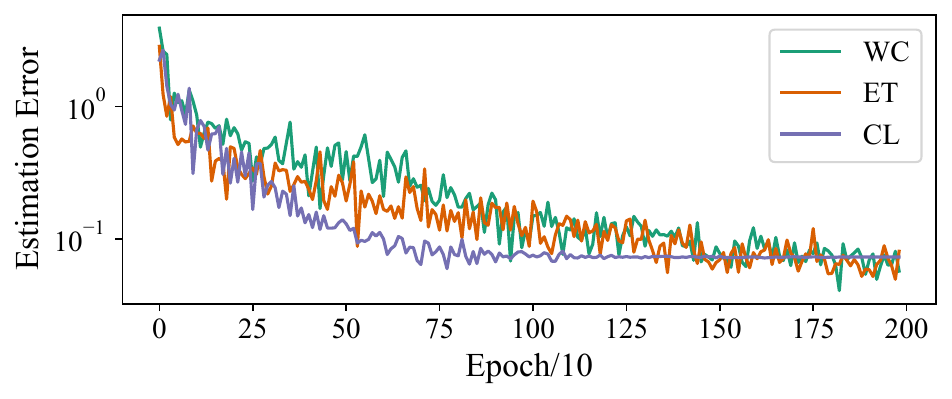}
   \caption{The average validation estimation errors in estimating influence decreased as GCN training proceeds.}
   \label{fig:val_error_6layer}
\end{figure}%

\begin{figure}[h!]
    \centering
    \includegraphics[width=\linewidth]{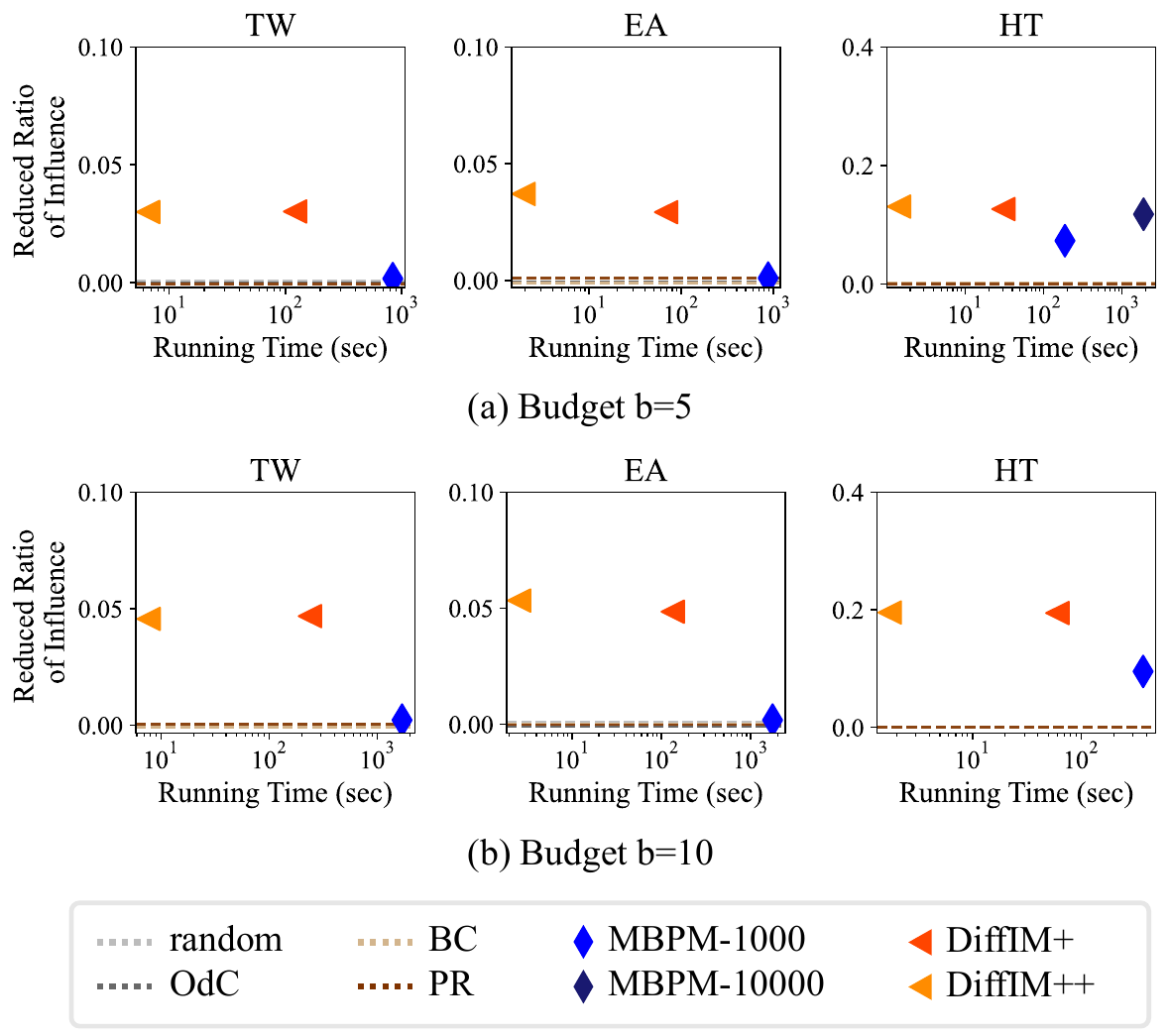}
    \caption{The effectiveness (the reduced ratio of influence) and running time of each method, with budget $b=5$ (top) and $b=10$ (bottom) on the large-scale datasets
    }
    \label{fig:app_perf_larger}
\end{figure}

\begin{table}[h!]
    \centering
    \begin{tabular}{l|l|r|r}
        \hline
        dataset &abbr. &\textbf{$|V|$}&\textbf{$|E|$} \bigstrut\\
        \hline
        cit-HepTh   & HT & $27,770$ & $352,807$\\
        email-EuAll & EA & $265,214$ & $420,045$\\ 
        twitter     & TW & $81,306$ & $1,768,149$\\
        \hline
    \end{tabular}
    \caption{The basic statistics of the large-scale datasets.}
    \label{tab:larger_data}
\end{table}

\end{document}